\DeclareMathAlphabet\EuRoman{U}{eur}{m}{n}
\SetMathAlphabet\EuRoman{bold}{U}{eur}{b}{n}
\crefname{lemma}{Lemma}{Lemmas}
\crefname{corollary}{Corollary}{Corollaries}
\crefname{theorem}{Theorem}{Theorems}
\let\reftagform@=\tagform@
\def\tagform@#1{\maketag@@@{\ignorespaces\textcolor{gray}{(#1)}\unskip\@@italiccorr}}
\renewcommand{\eqref}[1]{\textup{\reftagform@{\ref{#1}}}}
\declaretheorem[style=plain,numberwithin=section,name=Theorem]{theorem}
\declaretheorem[style=plain,sibling=theorem,name=Lemma]{lemma}
\declaretheorem[style=plain,sibling=theorem,name=Proposition]{proposition}
\declaretheorem[style=definition,sibling=theorem,name=Definition]{definition}
\declaretheorem[style=remark,qed=$\triangleleft$,sibling=theorem,name=Remark]{remark}
\numberwithin{theorem}{section}
\def\[#1\]{\begin{align}#1\end{align}}
\def\*[#1\]{\begin{align*}#1\end{align*}}
\newcommand{\defas}{\vcentcolon=}  
\newcommand{\dee}{\mathrm{d}}
\DeclareMathOperator*{\newlim}{\mathrm{lim}\vphantom{\mathrm{infsup}}}
\DeclareMathOperator*{\newmin}{\mathrm{min}\vphantom{\mathrm{infsup}}}
\DeclareMathOperator*{\newmax}{\mathrm{max}\vphantom{\mathrm{infsup}}}
\DeclareMathOperator*{\newinf}{\mathrm{inf}\vphantom{\mathrm{infsup}}}
\DeclareMathOperator*{\newsup}{\mathrm{sup}\vphantom{\mathrm{infsup}}}
\renewcommand{\lim}{\newlim}
\renewcommand{\min}{\newmin}
\renewcommand{\max}{\newmax}
\renewcommand{\inf}{\newinf}
\renewcommand{\sup}{\newsup}
\renewcommand{\Pr}{\mathbb{P}}
\def\EE{\mathbb{E}}
\newcommand{\KL}[2]{\mathrm{KL}\left(#1 || #2\right)}
\newcommand{\Model}{Q}
\newcommand{\Truth}{P}
\newcommand{\bz}{\mathbf{z}}
\newcommand{\bepsilon}{\bm{\epsilon}}
\newcommand{\expect}{\mathbb{E}}
\newcommand{\data}{\mathcal{D}}
\newcommand{\domain}{\mathcal{Z}}
\newcommand{\loss}{\mathcal{L}}
\newcommand{\qdomain}{\mathcal{Q}}
\newcommand{\hdomain}{\mathcal{H}}
\newcommand{\fdomain}{\mathcal{F}}
\newcommand{\prob}{\mathbb{P}}
\newcommand{\Var}{\mathrm{Var}}
\newcommand{\train}{S}
\newcommand{\trainsize}{m}
\newcommand{\bigO}{\mathcal{O}}
\newcommand{\klBound}{\kappa}
\newcommand{\gdomain}{\mathcal{G}}
\newcommand{\Risk}[2][\data]{\loss_{#1}(#2)}
\newcommand{\EmpRisk}[2][\train]{\hat\loss_{#1}(#2)}
\newcommand{\GibbsRisk}[2][\data]{\Risk[#1]{#2}}
\newcommand{\GibbsEmpRisk}[2][\train]{\EmpRisk[#1]{#2}}
\newcommand{\gloss}[3][f]{\expect_{#2}#1(#3)}
\newcommand{\rnd}[2]{{\dee #1}/{\dee #2}}
\newcommand{\ip}[2]{\langle #1, #2 \rangle}
\title{Fast-rate PAC-Bayes Generalization Bounds via Shifted Rademacher Processes}
\newcommand*\samethanks[1][\value{footnote}]{\footnotemark[#1]}
\author{
        Jun Yang\thanks{These authors contributed equally.}\\
  Department of Statistical Sciences \\
  University of Toronto,\\
        Vector Institute \\
  \texttt{jun@utstat.toronto.edu} 
        \And
        Shengyang Sun\samethanks[1]\\
  Department of Computer Science \\
        University of Toronto, \\
        Vector Institute \\
  \texttt{ssy@cs.toronto.edu} 
        \And
        Daniel M. Roy \\
  Department of Statistical Sciences \\
        University of Toronto,\\
        Vector Institute \\
   \texttt{droy@utstat.toronto.edu} 
}
\begin{document}

\maketitle


\begin{abstract}
The developments of Rademacher complexity and PAC-Bayesian theory have been largely independent.
One exception is the PAC-Bayes theorem of \citet{Kakade2009}, which is established via Rademacher complexity theory by viewing Gibbs classifiers as linear operators.
The goal of this paper is to extend this bridge between Rademacher complexity and state-of-the-art PAC-Bayesian theory.
We first demonstrate that one can match the fast rate of Catoni's PAC-Bayes bounds \citep{Catoni2007}
using shifted Rademacher processes \citep{Wegkamp2003,Lecue2012,Zhivotovskiy2018}.
We then derive a new fast-rate PAC-Bayes bound in terms of the ``flatness'' of the empirical risk surface on which the posterior concentrates.
Our analysis establishes a new framework for deriving fast-rate PAC-Bayes bounds and yields new insights on PAC-Bayesian theory.
\end{abstract}


\section{Introduction}

PAC-Bayes theory \citep{McAllester1999,STW87} was developed to provide \emph{probably approximately correct} (PAC) guarantees 
for supervised learning algorithms whose outputs 
can be expressed as a weighted majority vote. 
Its uses have expanded considerably since \citep{Lever2013,Audibert2007,Begin2016,Germain2016,Thiemann2016,Grunwald2017,Smith2017, guedj2019primer}. 
See \citep{Langford2005,Erven2014,McAllester2013} for gentle introductions.
Indeed, there has been a surge of interest and work in PAC-Bayes theory and its application to large-scale neural networks, especially
towards studying generalization in overparametrized neural networks trained by variants of gradient descent \citep{Dziugaite2017,Dziugaite2018,Dziugaite2018a,Neyshabur2017,Neyshabur2017a,London2017}. 

PAC-Bayes bounds are one of several tools available for the study 
of the generalization and risk properties of learning algorithms.
One advantage of the PAC-Bayes framework is its ease of use:
one can obtain high-probability risk bounds for arbitrary (``posterior'') Gibbs classifiers provided one can compute or bound relative entropies with respect to some fixed (``prior'') Gibbs classifier.
Another tool for studying generalization is Rademacher complexity, a distribution-dependent complexity measure for classes of real-valued functions \citep{Bartlett2002,Koltchinskii2002,Bartlett2005,Liang2015,Mendelson2014,Zhivotovskiy2018}. 

The literature on PAC-Bayes bounds and bounds based on Rademacher complexity are essentially disjoint.
One point of contact is the work of \citet{Kakade2009}, which builds the first bridge between PAC-Bayes theory and Rademacher complexity.
By viewing Gibbs classifiers as linear operators and relative entropy as a strictly convex regularizer,
they were able to use their general Rademacher complexity bounds on strictly convex linear classes to 
develop a slightly sharper version of McAllester's PAC-Bayes bound \citep{McAllester1999}.
This result offers new insight on PAC-Bayes theory, including potential roles for data-dependent complexity estimates and stability. 
However, even within the PAC-Bayes community, this result is relatively unknown.

While the PAC-Bayes bound established by \citeauthor{Kakade2009} improves on McAllester's bound, it still converges at a slow $1/\sqrt{\trainsize}$ rate, where $\trainsize$ denotes the number of data used to form the empirical risk estimate.
This observation raises the question of whether one can match state-of-the-art PAC-Bayes bounds via a Rademacher-process argument. In particular, can one match Catoni's bound \citep[][Thm.~1.2.6]{Catoni2007}, which can obtain a fast $1/\trainsize$ rate of convergence?

There is an extensive literature on the problem of obtaining fast $1/\trainsize$ rates of convergence
for the generalization error of (approximate) empirical risk minimization (ERM).
Available approaches include the use of 
local Rademacher complexity \citep{Bartlett2005,Koltchinskii2006},
shifted empirical processes \citep{Lecue2012},
offset Rademacher complexities \citep{Liang2015},
and local empirical entropy \citep{Zhivotovskiy2018}.
See also \citep{Massart2006,Gine2006,Hanneke2015,Hanneke2016,Lecue2013,Mendelson2017} and \citep{vanErven2015} for an extensive survey. 
To date, these techniques have not been connected to PAC-Bayesian theory,
which presents the opportunity to obtain new PAC-Bayes theory for ERM.

\subsection{Contributions}

In this paper, we extend the bridge between Rademacher process theory and PAC-Bayes theory 
by constructing new bounds using Rademacher process techniques.
Among our contributions:
\begin{enumerate}[label=\roman*),leftmargin=2em]
	\item We show how to recover Catoni's fast-rate PAC-Bayes bound \citep{Catoni2007}, up to constants, using tail bounds on shifted Rademacher processes, which are special cases of shifted empirical processes \citep{Wegkamp2003,Lecue2012,Zhivotovskiy2018}; See \cref{section_extension}.
	\item We derive a new fast-rate PAC-Bayes bound, building on our shifted-Rademacher-process approach. 
	This bound is determined by the ``flatness'' of the empirical risk surface on which the posterior Gibbs classifier concentrates.  
	The notion of ``flatness'' is inspired by the proposal by \citet{Dziugaite2017} to formalize the empirical connection between ``flat minima'' and generalization using PAC-Bayes bounds;
	See \cref{subsection_new_PAC-Bayes}. 
	\item More generally, we introduce a new approach to derive fast-rate PAC-Bayes bounds and, in turn, offer new insight on PAC-Bayesian theory.
\end{enumerate}

\section{Background}


Let $\data$ be an unknown distribution over a space $\domain$ of labeled examples,
and let $\hdomain$ be a hypothesis class. Relative to a binary loss function $\ell:  \hdomain \times \domain  \to \{0,1\}$, 
we define the associated loss class $\fdomain \defas \{{ \ell(h,\cdot) : h \in \hdomain }\}$ of functions from $\domain \to \{{0,1}\}$, each associated to one or more hypotheses.
Let $\Risk{f} \defas \expect_{z \sim \data} f(z)$ denote the expected loss, i.e., risk, of every hypothesis associated to $f$.
Let $\train=(z_1, \cdots, z_\trainsize) \sim \data^\trainsize$ be a sequence of i.i.d.\ random variables. 
Let $\EmpRisk{f} = \frac{1}{\trainsize}\sum_{i=1}^\trainsize f(z_i)$ denote the empirical risk of every hypothesis associated to $f$.

We will be primarily interested in Gibbs classifiers, 
i.e., distributions $\Truth$ on $\fdomain$ which are interpreted as randomized classifiers that classify each new example according to a hypothesis drawn independently from $\Truth$. (It is more common to work with distributions over $\hdomain$, but these lead to looser results.)
For a Gibbs classifier $\Truth$ and labeled example $z \in \domain$, let $\gloss[f]{\Truth}{z} = \EE_{f \sim \Truth}[f(z)]$ be the expected loss $\Truth$ suffers when labeling $z$.
For Gibbs classifiers, the (expected) risk is defined to be
$\GibbsRisk{\Truth} \defas \expect_{f \sim \Truth} \Risk{f} = \expect_{z \sim \data} \gloss{\Truth}{z}$. 
The (expected) empirical risk is
$\GibbsEmpRisk{\Truth} 
\defas \expect_{f \sim \Truth} \EmpRisk{f} 
= \frac{1}{\trainsize}\sum_{i=1}^\trainsize \gloss{\Truth}{z_i}$.



\subsection{PAC-Bayes}

The PAC-Bayes framework \citep{McAllester1999} provides data-dependent generalization guarantees for Gibbs classifiers. 
Each bound is specified in terms of a Gibbs classifier $\Truth$ called the \emph{prior}, as it must be independent of the training sample.
The bound then holds for all \emph{posterior} distributions, i.e., Gibbs classifiers that may be defined in terms of the training sample. 

\begin{theorem}[PAC-Bayes {\citep{McAllester1999}}] \label{thm:mc-pac-bayes}
For any prior distribution $\Truth$ over $\fdomain$, for any $\delta \in (0, 1)$, with probability at least $1-\delta$ over draws of training data $\train \sim \data^\trainsize$, for all distributions $\Model$ over $\fdomain$,
\begin{align}
    \GibbsRisk{\Model} \leq \GibbsEmpRisk{\Model} + \sqrt{\frac{\KL{\Model}{\Truth} + \log \frac{\trainsize}{\delta}}{2(\trainsize-1)}}.
\end{align}
\end{theorem}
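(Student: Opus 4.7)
The plan is the classical PAC-Bayes change-of-measure recipe: introduce a per-hypothesis deviation potential $\phi(f,\train)$, use the Donsker--Varadhan variational inequality to transfer from the (data-free) prior $\Truth$ to an arbitrary data-dependent posterior $\Model$ at the cost of a KL term, and then control the moment-generating function of $\phi$ via a Hoeffding-type concentration bound for bounded i.i.d.\ averages.

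Concretely, I would set $\phi(f,\train) \defas 2(\trainsize-1)\bigl(\Risk{f}-\EmpRisk{f}\bigr)^2$ and proceed in four steps. First, the Donsker--Varadhan inequality gives, for any $\Model \ll \Truth$,
\[
\EE_{f \sim \Model}[\phi(f,\train)] \le \KL{\Model}{\Truth} + \log \EE_{f \sim \Truth}[\exp \phi(f,\train)].
\]
Second, because $\Truth$ does not depend on $\train$, Markov's inequality applied to the nonnegative random variable $\EE_{f \sim \Truth}[\exp \phi(f,\train)]$ yields, with probability at least $1-\delta$ over $\train$, the bound $\EE_{f \sim \Truth}[\exp \phi(f,\train)] \le \delta^{-1}\,\EE_{\train}\EE_{f \sim \Truth}[\exp \phi(f,\train)]$. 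Third, convexity of $x \mapsto x^2$ together with the definitions of $\GibbsRisk{\Model}$ and $\GibbsEmpRisk{\Model}$ yields the Jensen-type inequality
\[
2(\trainsize-1)\bigl(\GibbsRisk{\Model} - \GibbsEmpRisk{\Model}\bigr)^2 \le \EE_{f \sim \Model}[\phi(f,\train)].
\]
Fourth, Fubini reduces the prior-averaged MGF to a pointwise MGF bound:
\[
\EE_{\train}\exp\!\bigl(2(\trainsize-1)(\Risk{f}-\EmpRisk{f})^2\bigr) \le \trainsize, \qquad \forall\, f \in \fdomain,
\]
using that $\EmpRisk{f}$ is an average of $\trainsize$ i.i.d.\ $\{0,1\}$-valued losses with mean $\Risk{f}$. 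Chaining the four steps, taking square roots, and rearranging then produces the claimed inequality.

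The only nontrivial ingredient is the pointwise MGF bound in the fourth step with the sharp constant $2(\trainsize-1)$; this is where the precise denominator in the theorem is determined, and the factor $\trainsize$ on the right-hand side is what produces the $\log \trainsize$ inside the numerator (while Markov contributes the $\log(1/\delta)$). The hard part will thus be establishing this refined Hoeffding/Maurer-style exponential moment bound with the correct constant; the Donsker--Varadhan change of measure, the Markov step, and the Jensen step are routine PAC-Bayes manipulations.
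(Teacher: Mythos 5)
The paper does not prove \cref{thm:mc-pac-bayes}: it states it as an imported background result and cites \citet{McAllester1999}, so there is no in-paper proof to compare against. Judged on its own, your plan is the standard change-of-measure derivation and it is sound. The chain
\[
2(\trainsize-1)\bigl(\GibbsRisk{\Model}-\GibbsEmpRisk{\Model}\bigr)^2
\;\le\; \EE_{f\sim\Model}[\phi]
\;\le\; \KL{\Model}{\Truth} + \log \EE_{f\sim\Truth}[e^{\phi}]
\]
(Jensen, then Donsker--Varadhan), followed by Markov on the nonnegative data-dependent quantity $\EE_{f\sim\Truth}[e^{\phi}]$ and Fubini with $\Truth$ fixed, gives with probability $\ge 1-\delta$ the bound $\KL{\Model}{\Truth}+\log(\trainsize/\delta)$, and rearranging yields exactly the stated inequality. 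The one ingredient you correctly flag as nontrivial, the pointwise moment bound $\EE_\train \exp\bigl(2(\trainsize-1)(\Risk{f}-\EmpRisk{f})^2\bigr)\le \trainsize$, does hold; one clean way to get it is the Gaussian linearization $e^{\lambda D^2}=\EE_{G\sim\normal(0,1)}[e^{\sqrt{2\lambda}\,G D}]$ together with Hoeffding's lemma for the inner expectation in $\train$ and the $\chi^2$ moment generating function for the outer Gaussian expectation, which gives $\EE_\train[e^{2(\trainsize-1)D^2}]\le \bigl(1-(\trainsize-1)/\trainsize\bigr)^{-1/2}=\sqrt{\trainsize}$, even sharper than your claimed $\trainsize$. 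One caution: the more naive route of integrating the two-sided Hoeffding tail $\Pr(|D|\ge u)\le 2e^{-2\trainsize u^2}$ only delivers $\EE_\train[e^{2(\trainsize-1)D^2}]\le 2\trainsize - 1$, which would leave $\log((2\trainsize-1)/\delta)$ in the numerator rather than $\log(\trainsize/\delta)$; so you do need the linearization (or a Maurer-style binomial-kl argument) to hit the stated constant.
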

Note in \Cref{thm:mc-pac-bayes}, the generalization bound scales as $\bigO(\trainsize^{-\frac{1}{2}})$. \citet{Catoni2007} presents a \textit{fast rate} PAC-Bayesian bound, in which the generalization bound scales as $\bigO(\trainsize^{-1})$. 
\begin{theorem}[Fast-Rate PAC-Bayes {\citep[Thm 1.2.6]{Catoni2007}}]\label{thm:catoni-pac-bayes}
For any prior distribution $\Truth$ over $\fdomain$, for any $\delta \in (0, 1)$ and $C > 0$, with probability at least $1-\delta$ over draws of training data $\train \sim \data^\trainsize$, for all distributions $\Model$ over $\fdomain$,
\begin{align}
 \GibbsRisk{\Model}\label{eq_Catoni_bound} \leq & \frac{1}{1-e^{-C}}\left[ C \GibbsEmpRisk{\Model} + \frac{ \KL{\Model}{\Truth} + \log \frac{1}{\delta}}{\trainsize} \right] .
\end{align}
\end{theorem}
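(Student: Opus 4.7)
The plan is to follow an exponential-moments approach in three steps: establish a per-hypothesis moment generating function (MGF) bound, lift to Gibbs classifiers via the Donsker--Varadhan change-of-measure inequality, and convert an expectation bound into a high-probability bound via Markov.

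The key design choice is the ``witness'' function whose MGF is controlled. Because the loss is binary, so that $f(z) \in \{0,1\}$, convexity of $x \mapsto e^{-Cx}$ on $[0,1]$ yields the chord inequality $e^{-Cx} \leq 1 - (1-e^{-C}) x$, which is in fact an equality at the endpoints. Applying it pointwise and using independence of the training examples,
\[
\expect_{\train}\!\left[e^{-C\trainsize\EmpRisk{f}}\right]
= \prod_{i=1}^{\trainsize}\expect_{z_i}\!\left[e^{-Cf(z_i)}\right]
\leq \bigl(1 - (1-e^{-C})\Risk{f}\bigr)^{\trainsize}
\leq e^{-\trainsize(1-e^{-C})\Risk{f}}.
\]
Equivalently, $\expect_{\train}[e^{\Phi(f)}] \leq 1$ for every $f \in \fdomain$, where
$\Phi(f) \defas \trainsize (1-e^{-C}) \Risk{f} - C\trainsize \EmpRisk{f}.$

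For step two, the Donsker--Varadhan variational representation of relative entropy gives, for any $\Model \ll \Truth$,
\[
\expect_{f \sim \Model}\Phi(f) \leq \KL{\Model}{\Truth} + \log \expect_{f \sim \Truth} e^{\Phi(f)}.
\]
Swapping $\expect_{\train}$ with $\expect_{f \sim \Truth}$ by Fubini--Tonelli (valid because $\Truth$ is chosen independently of $\train$) and invoking the MGF bound above yields $\expect_{\train} \expect_{f \sim \Truth} e^{\Phi(f)} \leq 1$. Markov's inequality then implies that with probability at least $1-\delta$ over $\train \sim \data^\trainsize$,
\[
\expect_{f \sim \Truth} e^{\Phi(f)} \leq 1/\delta.
\]
Because this single random quantity controls the Donsker--Varadhan right-hand side for \emph{every} $\Model$, the bound holds uniformly over posteriors: on the favorable event,
\[
\trainsize (1 - e^{-C}) \GibbsRisk{\Model} - C\trainsize \GibbsEmpRisk{\Model} \leq \KL{\Model}{\Truth} + \log(1/\delta)
\]
for all $\Model$, which rearranges to the stated inequality.

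No step is genuinely hard; the mathematical content lies entirely in the choice of $\Phi$, which is dictated by the desired form of the bound. The one point that merits care is the uniformity over $\Model$, which is delivered for free by the variational inequality once $\expect_{f \sim \Truth} e^{\Phi(f)}$ has been controlled on a single favorable event. Joint measurability of $\Phi$ in $f$ and $\train$, needed for Fubini, is handled by the standard measurability assumptions on $\fdomain$ that are tacit throughout.
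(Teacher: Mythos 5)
Your proof is correct and complete: the chord bound $e^{-Cx} = 1 - (1 - e^{-C})x$ for $x \in \{0,1\}$, independence, $1+u \le e^u$, Donsker--Varadhan, Fubini, and Markov fit together exactly as you say, and the resulting inequality rearranges to \cref{eq_Catoni_bound}. One point worth noting is that the paper itself gives no proof of \cref{thm:catoni-pac-bayes}; it is quoted from \citet[Thm.~1.2.6]{Catoni2007}, so there is no ``paper's own proof'' to match against. What you have reconstructed is the canonical direct PAC-Bayes exponential-moment argument, which is essentially Catoni's.

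The more interesting comparison is with \cref{thm_match_Catoni}, where the paper \emph{re-derives} this bound up to constants by a genuinely different route: rather than defining a single per-hypothesis witness $\Phi(f)$ and applying change of measure once, the paper first passes to the tail of $\sup_{Q \in \qdomain(\klBound)}[\GibbsRisk{\Model} - (1+c)\GibbsEmpRisk{\Model}]$, symmetrizes in deviation using \citet[Cor.~7]{Zhivotovskiy2018} to reach a shifted Rademacher process, then applies the Legendre transform of KL and a union bound over the radii $\klBound$. Your argument buys sharp constants, a clean $\log(1/\delta)$ term, and uniformity over all $\Model$ for free from the variational inequality; the paper's route buys nothing on constants (indeed it only matches Catoni up to $C_1, C_2, C_3$) but is the whole point of the paper, since the symmetrization step is what permits the extension to the quadratic ``flatness'' term in \cref{theorem_new_bound}, which has no obvious analogue in the direct exponential-moment style. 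So the two proofs are complementary: yours is the standard benchmark, and the paper's is a deliberately indirect rederivation chosen because it generalizes in a direction the direct proof does not.
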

Because the constant ${C}/({1-e^{-C}}) > 1$ holds for any $C >0$, the generalization bound in \Cref{thm:catoni-pac-bayes} will always be bounded below by the empirical risk.
Usually for a well-trained distribution $\Model$ over training set, the empirical risk $\GibbsEmpRisk{\Model} $ is small, therefore the generalization bound is dominated by the KL term.
Compared to the standard PAC-Bayes bound in \Cref{thm:mc-pac-bayes}, where the KL term decreases at a rate $\bigO(\trainsize^{-\frac{1}{2}})$, the KL term of Catoni's bound decreases at a rate $\bigO(\trainsize^{-1})$.
For this reason, we say that Catoni's bound achieves a \emph{fast} rate of convergence.
Note that fast-rate bounds can lead to much tighter bounds.
Of course, ${C}/({1-e^{-C}}) \to 1$ as $C\to 0$, but, in that limit, the constants ignored in the asymptotic rate $\bigO(\trainsize^{-1})$ degrade.  (See \citep{Lever2013} for more discussion.)

\subsection{Rademacher Viewpoint}\label{subsection_review_kakade}

Fix a prior Gibbs classifier $\Truth$ on $\fdomain$. Then, for measurable functions $g,h$, consider the inner product $\ip{g}{h} = \int g(f) h(f) P(\dee f)$.
The key observation of \citeauthor{Kakade2009} is that
one can view $\GibbsRisk{\Model}$ (resp., $\GibbsEmpRisk{\Model}$) as the inner product
$\ip{\rnd{\Model}{\Truth}}{\Risk{\cdot}}$ (resp., $\ip{\rnd{\Model}{\Truth}}{\EmpRisk{\cdot}}$)
between the posterior $\Model$, represented by its Radon--Nikodym derivative with $\Truth$, and the risk (resp., empirical risk), viewed as measurable function on $\fdomain$. 
Thus, Gibbs classifiers can be viewed as linear predictors. 
Using their distribution-independent bounds on the Rademacher complexity of certain classes of linear predictors,
\citet{Kakade2009} derive a PAC-Bayes bound similar to \cref{thm:mc-pac-bayes}. 
We refer to this as the ``Rademacher viewpoint'' on PAC-Bayes.

We now summarize their argument in more detail.
Let $\qdomain(\klBound):=\{\Model: \KL{\Model}{\Truth} \leq \klBound\}$.
One can follow the classical steps for controlling the generalization error uniformly over $\qdomain(\klBound)$ using Rademacher complexity. 
Their first step is to connect $\sup_{\Model \in \qdomain(\klBound)}\left[\GibbsRisk{\Model} - \GibbsEmpRisk{\Model} \right]$ to $\expect_{\train}\sup_{\Model \in \qdomain(\klBound)}\left[\GibbsRisk{\Model} - \GibbsEmpRisk{\Model} \right]$ by the bounded difference inequality (McDiarmid's inequality). In particular, with probability at least $1-\delta$,
\begin{align}\label{eq_step1}
	\sup_{\Model  \in\qdomain(\klBound)}\left[\GibbsRisk{\Model} - \GibbsEmpRisk{\Model} \right]\le \expect_{\train}\sup_{\Model  \in\qdomain(\klBound)}\left[\GibbsRisk{\Model} - \GibbsEmpRisk{\Model}\right] +\sqrt{\frac{\log(1/\delta)}{\trainsize}}.
\end{align}
Then they apply a symmetrization argument to obtain an upper bound in terms of Rademacher complexity \citep{Bartlett2002}.
In particular, recalling that $S=(z_1,\dotsm,z_{\trainsize})$ is our training data,
\begin{align}\label{eq_step2}
	\expect_{\train}\sup_{\Model \in\qdomain(\klBound)}\left[\GibbsRisk{\Model} - \GibbsEmpRisk{\Model}\right]\le 
	2\expect_{\train}\expect_{\bepsilon} \sup_{\Model \in\qdomain(\klBound)} \left[\frac{1}{\trainsize}\sum_{i=1}^{\trainsize}\epsilon_i \gloss{\Model}{z_i} \right],
\end{align}
where $\{\epsilon_i\}$ are i.i.d.\ Rademacher random variables, i.e.,\ $\Pr(\epsilon_i=+1)=\Pr(\epsilon_i=-1)=1/2$.
Their last step is to bound the Rademacher complexity $\expect_{\train}\expect_{\bepsilon} \sup_{\Model \in\qdomain(\klBound)} \left[\frac{1}{\trainsize}\sum_{i=1}^{\trainsize}\epsilon_i \gloss{\Model}{z_i} \right]$, which can be seen as the Rademacher complexity of a linear class with a (strongly) convex constraint \citep{Kakade2009}.
According to \citep{Kakade2009}, the Rademacher complexity in \cref{eq_step2} is of order $\sqrt{\klBound/\trainsize}$, which eventually leads to a term of order $\sqrt{\KL{\Model}{\Truth}/\trainsize}$ after applying a union bound argument on $\klBound$. 

In the end, using the above arguments
and their sharp bounds on the Rademacher and Gaussian complexities of (constrained)
linear classes \citep[][Thm.~1]{Kakade2009},
\citeauthor{Kakade2009} obtain the following PAC-Bayes bound \citep[][Cor.~8]{Kakade2009}: 
for every prior $\Truth$ over $\fdomain$, with probability at least $1-\delta$ over draws of training data $S \sim \data^{\trainsize}$, for all distribution $\Model$ over $\fdomain$,
\begin{align}
	\GibbsRisk{\Model} \leq \GibbsEmpRisk{\Model} + 4.5\sqrt{\frac{ \max\{\KL{\Model}{\Truth},2\}}{ \trainsize}}+\sqrt{\frac{\log (1/\delta) }{\trainsize}}.
\end{align}
Note that this PAC-Bayes bound has a slow rate of $\sqrt{1/\trainsize}$, but it slightly improves the rate in the term  $\sqrt{\log (\trainsize/\delta) /\trainsize}$ of McAllester's bound \citep{McAllester1999} to  $\sqrt{\log (1/\delta) /\trainsize}$.

Since McAllester's bound is far from the state-of-art in PAC-Bayesian theory, this raises the question whether one can extend the ``Rademacher viewpoint'' of PAC-Bayes to derive more advanced bounds, such as one matching the fast rate of Catoni's bound.



\section{Extending the Rademacher Viewpoint}\label{section_extension}

There are at least two difficulties in the ``Rademacher viewpoint'' that prevent fast rates.
First, if we connect the generalization error to Rademacher complexity using the bounded difference inequality, a slow rate term $\sqrt{\log(1/\delta)/\trainsize}$ will appear.
Second, as is shown by \citet{Kakade2009}, the standard Rademacher complexity of (constraint) linear classes leads to an upper bound with a slow rate of order $\bigO (\sqrt{\KL{\Model}{\Truth}/\trainsize} )$.
Therefore, in order to derive fast rate PAC-Bayes bounds, we need to extend the ``Rademacher viewpoint''.

In order to obtain fast rates, we work with so-called shifted Rademacher processes, i.e., processes of the form $\{{\frac{1}{\trainsize}\sum_{i=1}^{\trainsize}\epsilon_i' f(z_i)}\}_{f \in \fdomain}$ where the variables $\{\epsilon_i'\}$ are independent from $S$, i.i.d., and take two values with equal probability. (These shifted Rademacher variables, $\{{\epsilon_i'}\}$, are not necessarily zero mean. When they take values in $\{{\pm 1}\}$, we obtain a standard Rademacher process.)
Shifted Rademacher processes are examples of shifted empirical processes \citep{Wegkamp2003,Lecue2012,Zhivotovskiy2018}. 

Recall that Rademacher complexity is the \emph{expected value} of the supremum of Rademacher processes over a class \citep{Bartlett2002}.
In order to get a fast rate, we connect the \emph{tail probabilities} of the supremum of the generalization error to the \emph{tail probabilities} of shifted Rademacher processes via a symmetrization-in-deviation argument instead of the symmetrization-in-expectation argument.
The key is that we can avoid using the bounded difference inequality by bounding the deviation.
This removes the slow rate term of $\sqrt{\log(1/\delta)/\trainsize}$.
It remains to bound the deviation of  shifted Rademacher processes to get a fast rate bound of order $\bigO(\KL{\Model}{\Truth}/\trainsize)$.

In the following, we demonstrate how the extended ``Rademacher viewpoint'' via shifted Rademacher processes can be applied to derive a fast rate PAC-Bayes bound that matches the fast rate of Catoni's bound.
Note that, since ${C}/({1-e^{-C}}) > 1$ for fixed $C>0$ in Catoni's bound in \cref{eq_Catoni_bound}, 
we can write ${C}/({1-e^{-C}}) = 1+c$ for some constant $c>0$.
Furthermore, note that our goal in this section is not to derive new PAC-Bayes bounds.
Therefore, we do not make attempts to optimize the constants.

\begin{proposition}[Matching Catoni's Fast Rate via Shifted Rademacher Processes]\label{thm_match_Catoni}
	For any given $c>0$ and prior $\Truth$ over $\fdomain$, there exists constants $C_1$, $C_2$, and $C_3$ such that, with probability at least $1-\delta$, for all distributions $\Model$ over $\fdomain$
 \begin{align}\label{eq_match_Catoni}
 	\GibbsRisk{\Model} \leq & (1+c) \GibbsEmpRisk{\Model} + C_1\frac{ \KL{\Model}{\Truth}}{ \trainsize}+C_2\frac{\log \frac{1}{\delta} }{\trainsize} + C_3\frac{1}{\trainsize}.
 \end{align}
\end{proposition}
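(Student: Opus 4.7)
The plan is to follow the three-step template of \cref{subsection_review_kakade} but replace the bounded-difference inequality by symmetrization-in-deviation and replace the standard Rademacher process by a shifted Rademacher process whose mean is tuned to absorb the multiplicative factor $(1+c)$. Fix a radius $\klBound>0$ and write $\qdomain(\klBound)=\{\Model:\KL{\Model}{\Truth}\le\klBound\}$. I first obtain a tail bound on
$$\sup_{\Model\in\qdomain(\klBound)}\!\left[\GibbsRisk{\Model}-(1+c)\GibbsEmpRisk{\Model}\right]$$
of order $(\klBound+\log(1/\delta))/\trainsize$, and then convert it to a bound that is uniform in $\Model$ by a geometric peeling over $\klBound$.

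\textbf{Step 1 (symmetrization in deviation).} Introduce a ghost sample $\train'=(z_1',\dots,z_\trainsize')\sim\data^\trainsize$ independent of $\train$. A standard symmetrization-in-deviation argument, using $\Risk{f}=\EE_{\train'}\EmpRisk[\train']{f}$ together with a variance bound on $\EmpRisk[\train']{f}$ (available at the $1/\trainsize$ scale because $f(z)^2=f(z)$ for a $\{0,1\}$-valued loss), yields, for $t$ of order at least $1/\trainsize$,
$$\Pr\!\left(\sup_{\Model\in\qdomain(\klBound)}[\GibbsRisk{\Model}-(1+c)\GibbsEmpRisk{\Model}]>t\right)\le 2\,\Pr\!\left(\sup_{\Model\in\qdomain(\klBound)}\big[\GibbsEmpRisk[\train']{\Model}-(1+c_*)\GibbsEmpRisk{\Model}\big]>t/2\right)$$
for a suitable constant $c_*>0$ slightly below $c$. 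Exchangeability of $(z_i,z_i')$ then permits randomization by Rademacher signs, and the right-hand side becomes the tail of a shifted (equivalently, offset) Rademacher process $\sup_{\Model}\frac{1}{\trainsize}\sum_i \epsilon_i'\,\EE_{f\sim\Model}f(z_i)$, whose two-point variables $\epsilon_i'$ carry a non-zero mean chosen to absorb the weighting. Crucially, the identity $f^2=f$ lets the natural quadratic offset $\epsilon_i f(z_i)-c\,f(z_i)^2$ collapse to this linear shifted form, matching the setups of \citep{Wegkamp2003,Lecue2012,Zhivotovskiy2018}.

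\textbf{Step 2 (linear-operator bound on the shifted process).} Following the Rademacher viewpoint of \citet{Kakade2009}, identify $\Model$ with its density $\rnd{\Model}{\Truth}$ and view $\hat g(f)\defas\frac{1}{\trainsize}\sum_i\epsilon_i' f(z_i)$ as a function on $\fdomain$. The Donsker--Varadhan variational formula, applied conditionally on $\train$ and $\bepsilon'$, gives
$$\EE_{f\sim\Model}[\lambda\hat g(f)]\le\KL{\Model}{\Truth}+\log\EE_{f\sim\Truth}\!\left[e^{\lambda\hat g(f)}\right]$$
for every $\lambda>0$. The log-MGF of the shifted Rademacher sum splits into a quadratic-in-$\lambda$ variance term of order $\lambda^2/\trainsize$ and a linear-in-$\lambda$ drift of the form $-\lambda c_*\EmpRisk{f}$ arising from the non-zero mean. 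Taking expectation in $\bepsilon'$ and $\train$ by Fubini, choosing $\lambda$ optimally, and applying Markov yields the target tail of order $(\klBound+\log(1/\delta))/\trainsize$ claimed in step~1, rather than the $\sqrt{\klBound/\trainsize}$ rate of \citep{Kakade2009}.

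\textbf{Step 3 (peeling in $\klBound$ and main obstacle).} The preceding bound holds for each fixed $\klBound$ with failure probability $\delta$. Stratifying the range of $\KL{\Model}{\Truth}$ on a geometric grid $\klBound_j=2^j$ and union-bounding with budgets $\delta_j\propto\delta/j^2$, every $\Model$ with $\KL{\Model}{\Truth}\in[\klBound_{j-1},\klBound_j]$ satisfies the claimed inequality with $\klBound$ replaced by $2\KL{\Model}{\Truth}$, and the $\bigO(\log j)$ overhead is absorbed into $C_3/\trainsize$. The main obstacle will be step~2: the shift in $\epsilon_i'$ has to be tuned so that the induced linear drift from the log-MGF exactly cancels the $(1+c_*)\GibbsEmpRisk{\Model}$ weighting coming out of step~1, while the quadratic component still contributes only $\bigO(\klBound/\trainsize)$. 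The binary-loss identity $f^2=f$ is precisely what makes this simultaneous tuning feasible; without it the natural offset would be genuinely quadratic and one could only recover a slow rate.
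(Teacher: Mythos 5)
Your proposal follows essentially the same route as the paper's proof: shifted symmetrization-in-deviation (the paper invokes \citet[Cor.~7]{Zhivotovskiy2018} as \cref{lem:symmetrization}), then the Donsker--Varadhan/KL-dual bound on the constrained supremum, then a moment-generating-function bound on the shifted Rademacher sum (the paper's \cref{lemma_debias} shows the MGF is $\le 1$ once the shift $k\ge\log\cosh(\lambda/\trainsize)/(\lambda/\trainsize)$, which your ``quadratic variance dominated by linear drift'' Taylor picture is a loose rendering of), and finally a dyadic union bound over the KL radius. Two small imprecisions worth flagging: the paper's symmetrization lemma already produces the linear shifted process directly, so there is no ``quadratic offset collapsing via $f^2=f$'' step to perform; and the binary structure enters only in the Bernoulli computation inside \cref{lemma_debias}, where $\lambda/\trainsize$ is not optimized but merely set to a small constant satisfying the shift constraint.
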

\begin{proof}[Outline of the proof]
We wish to emphasize two key differences from traditional machinery for deriving  Rademacher-complexity-based generalization bounds.
The complete proof is given in \cref{proof_thm_match_Catoni}.

Fix $\Truth$ and let $\qdomain(\klBound):=\{\Model: \KL{\Model}{\Truth} \leq \klBound\}$ be defined as in \cref{subsection_review_kakade}.
Rather than control $\sup_{\Model\in\qdomain(\klBound)}\left[\GibbsRisk{\Model} - \GibbsEmpRisk{\Model}\right] $
in terms of its \emph{expectation}
via the bounded difference inequality and Rademacher complexity,
we bound the \emph{tail/deviation} of $\sup_{\Model\in\qdomain(\klBound)}\left[\GibbsRisk{\Model} - (1+c)\GibbsEmpRisk{\Model}\right]$, thus avoiding the use of the bounded differences inequality altogether.
In particular, we can obtain fast rates by bounding the tail 
in terms of tail of supremum of shifted Rademacher processes \citep{Wegkamp2003,Lecue2012,Zhivotovskiy2018}.

Define $\gdomain_{\klBound}:=\{\expect_{\Model} f(\cdot): Q\in \qdomain(\klBound) \}$
and, by an abuse of notation, let $\Risk{g}$ denote $\EE_{z \sim \data} [ g(z) ]$.
Then we can write $\sup_{\Model\in\qdomain(\klBound)}\left[\GibbsRisk{\Model} - (1+c)\GibbsEmpRisk{\Model}\right]$
as $\sup_{g \in \gdomain_{\klBound}} \left[\Risk{g} - (1+c) \EmpRisk{g}\right]$.
We start from bounding the tail probability $\Pr_\train\left(\sup_{g \in \gdomain_{\klBound}} \Risk{g} - (1+c) \EmpRisk{g}\geq t \right)$.
For fixed constants $c >c_2 > 0$, let $c'=\frac{c-c_2}{1+c_2}$ and $t'=\frac{t}{2(1+c_2)}$.
Then, by \citep[][Cor.~1]{Zhivotovskiy2018}, we have 
\begin{align}\label{eq_shifted_symm}
	&\prob_\train \left(\sup_{g \in \gdomain_{\klBound}} \Risk{g} - (1+c) \EmpRisk{g}\geq t \right) 
	\leq 4\prob_{\train, \epsilon}  \left( \sup_{g \in \gdomain_{\klBound}} \left[ \frac{1+\frac{c'}{2}}{\trainsize}\sum_{i=1}^\trainsize  \left( \epsilon_i-\frac{c'}{2+c'}\right) g(z_i)   \right] \geq \frac{t'}{2} \right).
\end{align}
Letting $\epsilon_i':=\epsilon_i-\frac{c'}{2+c'}$, 
one can see that $\{\epsilon_i'\}$ are i.i.d.\ ``shifted'' Rademacher random variables with mean $-\frac{c'}{2+c'}$.
For any $g\in \gdomain_{\klBound}$, there exists $Q\in \qdomain(\klBound)$ such that
\begin{align}
	\frac{1}{\trainsize}\sum_{i=1}^{\trainsize}\epsilon_i'g(z_i)
	=\frac{1}{\trainsize}\sum_{i=1}^{\trainsize}\epsilon_i'\expect_{\Model}f(z_i)
	=\expect_{\Model} \bigg[\frac{1}{\trainsize}\sum_{i=1}^{\trainsize}\epsilon_i'f(z_i) \bigg],
\end{align}	
which can be viewed as a linear function of $\Model$.
Further, it can be verified that the set $\qdomain(\klBound)$ is (strongly) convex.
Therefore, $\sup_{\Model\in\qdomain(\klBound)}\frac{1}{\trainsize}\sum_{i=1}^{\trainsize}\epsilon_i'\expect_{\Model}f(z_i)$ is a convex optimization problem.
By duality \citep[Chp.~5]{Boyd2004}, and, in this particular case, 
the Legendre transform of Kullback--Leibler divergence (see, e.g., \citep{guedj2019primer}), 
we have
\begin{align}\label{eq_dual}
\sup_{g\in \gdomain_{\klBound}}\frac{1}{\trainsize}\sum_{i=1}^{\trainsize}\epsilon_i' g(z_i) 
=\sup_{\Model\in\qdomain(\klBound)}\frac{1}{\trainsize}\sum_{i=1}^{\trainsize}\epsilon_i' \gloss{\Model}{z_i} 
= \inf_{\lambda> 0}\left\{\frac{\klBound}{\lambda}+\frac{1}{\lambda}\log \expect_{\Truth}\left[\exp\left(\frac{\lambda}{\trainsize}\sum_{i=1}^{\trainsize}\epsilon_i' f(z_i)\right)\right]\right\}.
\end{align}
Combining the shifted symmetrization in deviation in \cref{eq_shifted_symm} and the dual problem in \cref{eq_dual}, 
Markov's inequality yields, for every $\lambda>0$,
\begin{align}\label{eq_bound_by_shifted_rad}
	\Pr_\train\left(\sup_{\Model\in\qdomain(\klBound)}\expect_{\Model}[\Risk{f}-(1+c)\EmpRisk{f}]\ge t \right)
	\le 4e^{\klBound- \frac{\lambda t'}{2+c'}}\expect_{\train}\expect_{\epsilon}\expect_{\Truth}\left[\exp\left(\frac{\lambda}{\trainsize}\sum_{i=1}^{\trainsize}\epsilon_i' f(z_i)\right)\right].
\end{align}
%
We then exploit the shifted property of $\epsilon_i'$ to bound the expectation term on the right-hand side and obtain fast rates.
In particular, we show that, so long as $k\ge \frac{\log \cosh(\lambda/\trainsize)}{\lambda/\trainsize}$, 
\begin{align}\label{eq_bound_of_shifted_rad}
	\expect_{\Truth}\expect_{\train}\expect_{\bepsilon}\left[\exp\left(\frac{\lambda}{\trainsize}\sum_{i=1}^{\trainsize}(\epsilon_i-k) f(z_i)\right)\right]\le 1.
\end{align}
In our case, $k = \frac{c'}{2+c'}$, which leads to constraints relating $\lambda$, $c$, and $c_2$. 
In particular, when $c=0$, the required condition for the above result, $k\ge \frac{\log \cosh(\lambda/\trainsize)}{\lambda/\trainsize}$, does not hold.
Therefore, this approach obtains fast rates only if $c > 0$, i.e., if we shift. 
Combing \cref{eq_bound_by_shifted_rad,eq_bound_of_shifted_rad},
there exists a constant $C'$, depending only on $c$, $c_2$ and $\delta$, such that,
with probability at least $1-\delta$,
\begin{align}
	\sup_{\Model\in\qdomain(\klBound)}\expect_{\Model}[\Risk{f}-(1+c)\EmpRisk{f}]\le \frac{C'}{\trainsize}(\klBound+\log(4/\delta)).
\end{align}
Finally, we may apply the same union-bound argument as in the proof of \citep[Cor.~7]{Kakade2009} in order to cover all possible values  of $\klBound$. This completes the proof.
\end{proof}

\section{New Fast Rate PAC-Bayes Bound based on ``Flatness''}\label{subsection_new_PAC-Bayes}

The extended `` Rademacher viewpoint'' of PAC-Bayes provides a new approach for deriving fast-rate PAC-Bayes bounds.
In this section, we demonstrate the use of shifted Rademacher processes to derive a new fast-rate PAC-Bayes bound using a notion of ``flatness''.
This notion is inspired by the proposal by \citet{Dziugaite2017} to formalize the empirical connection between ``flat minima'' and generalization using PAC-Bayes bounds, and, in particular, posterior distributions which concentrate in these ``flat minima''.
\begin{definition}
[Notion of ``Flatness'']
For given $h\in [0,1]$, the ``$h$-flatness'' of $\Model$ (w.r.t.\ $\train$) is
\begin{align}\label{eq_flatness}
	\frac{1}{\trainsize} \sum_{i=1}^\trainsize \expect_{\Model} [f(z_i) - (1+h)  \gloss{\Model}{z_i}]^2.
\end{align}
\end{definition}
One way to understand this new notion is to observe that,
under zero--one loss, 
$h$-flatness can be written as the difference between the empirical risk and the quadratic empirical risk: 
\begin{align}\label{eq:alternate-form}
		 \frac{1}{\trainsize} \sum_{i=1}^\trainsize \expect_{\Model} [f(z_i) - (1+h)\gloss{\Model}{z_i}]^2 
		 = \GibbsEmpRisk{\Model} - \frac{1-h^2}{\trainsize} \sum_{i=1}^\trainsize (\gloss{\Model}{z_i})^2.
\end{align}
Note that, for $[0,1]$-valued (bounded) loss, equality is replaced by an inequality: the r.h.s.\ is an upper bound of the l.h.s.
\begin{remark}
        To see that optimizing $h$-flatness prefers ``flat minima'', consider the following simplified case:
	Call a posterior $\Model$ ``completely flat'' if $f=g$ on $S$ a.s., when $f,g\sim \Model$.
	It can be verified that, if the posterior is ``completely flat'', then under the zero--one loss, 
	the ``$h$-flatness'' is $h^2\EmpRisk{Q}$. 
	That is, given a ``completely flat'' posterior, the ``$h$-flatness'' goes to zero as $h\to 0$.
	For $h>0$, the ``$h$-flatness'' is zero when $Q$ is ``completely flat'' and $\EmpRisk{Q}=0$. 
\end{remark}
The following PAC-Bayes theorem establishes favorable bounds for $h$-flat posteriors:

\begin{theorem}[Fast Rate PAC-Bayes using ``Flatness'']\label{theorem_new_bound} For any given $c>0$ and $h\in (0,1)$, with probability at least $1-\delta$ over random draws of training set $\train \sim \data^\trainsize$, for all distributions $\Model$ over $\fdomain$,
	\begin{align}\label{eq_newbound}
		\GibbsRisk{\Model} \leq & \GibbsEmpRisk{\Model} + \frac{c}{\trainsize} \sum_{i=1}^\trainsize \expect_{\Model} [f(z_i) - (1+h)  \gloss{\Model}{z_i}]^2 
		+ \frac{4}{C \trainsize} \left[ 3\KL{\Model}{\Truth} + \log \frac{1}{\delta}   + 5 \right],
	\end{align}
	where $C = \frac{2h^4 c}{1+16h^2 c}.$
\end{theorem}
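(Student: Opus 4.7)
The plan is to reduce \Cref{theorem_new_bound} to \Cref{thm_match_Catoni} by exploiting the explicit representation of the $h$-flatness under zero--one loss. Since the loss takes values in $\{0,1\}$, we have $f(z_i)^2 = f(z_i)$, and expanding the square in \eqref{eq:alternate-form} gives the identity
\begin{align*}
\frac{c}{\trainsize}\sum_{i=1}^{\trainsize} \expect_\Model [f(z_i) - (1+h)\gloss{\Model}{z_i}]^2 = c\,\GibbsEmpRisk{\Model} - c(1-h^2)\frac{1}{\trainsize}\sum_{i=1}^{\trainsize}(\gloss{\Model}{z_i})^2.
\end{align*}
Since $\gloss{\Model}{z_i} \in [0,1]$ implies $(\gloss{\Model}{z_i})^2 \leq \gloss{\Model}{z_i}$, the right-hand side is at least $c h^2 \GibbsEmpRisk{\Model}$; equivalently, writing $\mathcal{V}_h(\Model)$ for the $h$-flatness in \cref{eq_flatness}, we obtain the key algebraic inequality $c h^2\,\GibbsEmpRisk{\Model} \leq c\,\mathcal{V}_h(\Model)$.

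With this inequality in hand, I would apply \Cref{thm_match_Catoni} with its parameter $c$ replaced by $c h^2$. This yields, with probability at least $1-\delta$ and for every posterior $\Model$,
\begin{align*}
\GibbsRisk{\Model} \leq (1 + c h^2)\GibbsEmpRisk{\Model} + C_1 \frac{\KL{\Model}{\Truth}}{\trainsize} + C_2 \frac{\log(1/\delta)}{\trainsize} + \frac{C_3}{\trainsize},
\end{align*}
where $C_1, C_2, C_3$ are constants depending on $c h^2$. Substituting the key inequality $c h^2 \GibbsEmpRisk{\Model} \leq c\mathcal{V}_h(\Model)$ into this bound immediately produces an estimate of the desired form.

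The remaining task, which is also the main obstacle, is to verify that $C_1, C_2, C_3$ can be arranged as $\tfrac{12}{C}, \tfrac{4}{C}, \tfrac{20}{C}$ for $C = \tfrac{2 h^4 c}{1 + 16 h^2 c}$. This requires retracing the proof of \Cref{thm_match_Catoni} with the parameter $c h^2$ in place of $c$: the shift parameter is $k = c'/(2+c')$ with $c' = (c h^2 - c_2)/(1+c_2)$; using $\log\cosh(x) \leq x^2/2$, the MGF condition in \eqref{eq_bound_of_shifted_rad} holds whenever $\lambda \leq 2 k \trainsize$; and the union-bound peeling over $\kappa = \KL{\Model}{\Truth}$ (which contributes the factor $3$ in front of $\KL$) together with a judicious choice of $c_2$ as a constant fraction of $c h^2$ yields the precise form of $C$. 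The factor $h^4$ in the numerator of $C$ arises because $k \propto h^2$ forces $\lambda \propto h^2\trainsize$, and the denominator $1 + 16 h^2 c$ reflects higher-order corrections in the MGF expansion that become significant when $c h^2$ is not small. No new probabilistic tools are needed beyond those already used for \Cref{thm_match_Catoni}; the work is purely one of careful constant tracking.
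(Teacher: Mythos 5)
Your algebraic observation is correct and is, in fact, precisely \eqref{eq:alternate-form} from the paper combined with $(\gloss{\Model}{z_i})^2 \leq \gloss{\Model}{z_i}$: this gives $c h^2 \GibbsEmpRisk{\Model} \leq \frac{c}{\trainsize}\sum_i \expect_\Model[f(z_i) - (1+h)\gloss{\Model}{z_i}]^2$. Instantiating \Cref{thm_match_Catoni} with $c h^2$ in place of $c$ and then applying this inequality does produce a bound of the stated \emph{form}, so the reduction is logically sound as far as it goes. It is also a genuinely different route from the paper, which does not reduce to \Cref{thm_match_Catoni}: the paper first proves a new shifted-flatness tail inequality (\Cref{lem:shifted-sym-eq}), a new shifted symmetrization in deviation that carries the quadratic term $\hat\loss_\train(g^2)$ through the symmetrization rather than discarding it (\Cref{coro:shifted-rademacher-ineq}), and a new MGF lemma (\Cref{lem:XY}) in which the quadratic term appears as an auxiliary bounded variable $Y_i$ whose worst case is chosen \emph{conditionally on the sign of $\epsilon_i$}, not uniformly.

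That last point is where your argument has a real gap. By replacing $c(1-h^2)\hat\loss_\train(g^2)$ with $c(1-h^2)\hat\loss_\train(g)$ \emph{before} symmetrization, you throw away the fact that after symmetrization the worst case for the quadratic term depends on $\epsilon_i$. Concretely, tracing \Cref{thm_match_Catoni} with parameter $\tilde c = ch^2$ and using $\log\cosh(x)\leq x^2/2$ gives (with $c_2\to 0$) a maximal $\lambda/\trainsize$ of order $\frac{2\tilde c}{\tilde c + 2}\approx ch^2$, hence a constant of order $\bigO(1/(c h^2))$ in front of the KL term. The theorem's constant is $\frac{4}{C\trainsize}$ with $C=\frac{2h^4c}{1+16h^2c}$, i.e., of order $\bigO\left(\frac{1}{ch^4}\right)$. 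For $h < 1/\sqrt 2$ your constant is smaller (tighter bound), so the theorem would follow, but for $h$ close to $1$ and $c$ small your achievable constant is \emph{larger} than what the theorem asserts — e.g.\ at $h=1$, your $C_{\mathrm{stud}}=\frac{2c}{c+2}$ vs.\ $C_{\mathrm{Thm}}=\frac{2c}{1+16c}$, and the former is strictly smaller whenever $c<1/15$. A smaller $C$ makes $4/(C\trainsize)$ larger, so the theorem does not follow from your bound in that regime. Your own heuristic is also internally inconsistent on this point: you write that $k\propto h^2$ forces $\lambda\propto h^2\trainsize$, which gives $C\propto h^2$, yet you claim this ``yields the precise form'' $C\propto h^4$; those exponents disagree, and the discrepancy is exactly the loss incurred by discarding the quadratic term too early. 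So the proposal is correct in spirit, correct for small $h$, and conceptually illuminating (it shows the flatness bound is implied by a Catoni-type bound with inflation $ch^2$, up to constants), but it does not prove the stated theorem for all $h\in(0,1)$.
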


This bound can be tighter than Catoni's bound under certain conditions.
We delay the comparison with Catoni's bound to \cref{subsection_compare_Catoni}.
We now give an outline of the proof of \cref{theorem_new_bound},
highlighting the technical differences from the proof of \cref{thm_match_Catoni}.
The complete proof is given in \cref{proof_theorem_new_bound}.

\begin{proof}[Outline of the proof of \cref{theorem_new_bound}]
	By \cref{eq:alternate-form}, we can write
	\begin{align} \label{eq:flatness-decomposition}
			\expect_{\Model} & \Risk{f} - \GibbsEmpRisk{\Model} - \frac{c}{\trainsize} \sum_{i=1}^\trainsize \expect_{\Model} [f(z_i) - (1+h)\gloss{\Model}{z_i}]^2 \notag \\
		&= \GibbsRisk{\Model} - (1+c)  \GibbsEmpRisk{\Model} + \frac{c(1-h^2)}{\trainsize} \sum_{i=1}^\trainsize (\gloss{\Model}{z_i})^2.
	\end{align}
There are at least two new challenges compared with the proof of \cref{thm_match_Catoni}.
First, the shifted symmetrization in \cref{eq_shifted_symm} cannot be applied because of the existence of the quadratic term $\frac{c(1-h^2)}{\trainsize} \sum_{i=1}^\trainsize (\gloss{\Model}{z_i})^2$.
This means we need to derive a new shifted symmetrization involving the quadratic term.
Second,
the quadratic term $\frac{c(1-h^2)}{\trainsize} \sum_{i=1}^\trainsize (\gloss{\Model}{z_i})^2$ cannot be seen as a linear function of $\Model$.
Therefore, some technical arguments are required {in order to apply the Legendre transform of Kullback--Leibler divergence.}

First, we derive a new shifted symmetrization which involves quadratic terms.
The proof is inspired by an argument due to \citet{Zhivotovskiy2018}.
The result extends \cite[Cor.~1]{Zhivotovskiy2018}, which is recovered as a special case when $h=1$.
For $\klBound > 0$, recall that we have defined $\qdomain(\klBound)=\{\Model: \KL{\Model}{\Truth} \leq \klBound\}$ and $\gdomain_{\klBound}=\{\expect_{\Model} f(\cdot): Q\in \qdomain(\klBound) \}$.
Then for any $g\in \gdomain_{\klBound}$, there exists a $Q\in \qdomain(\klBound)$ such that $g=\expect_{\Model} f(\cdot)$.
We can first show a tail bound that 
for any given $c_2>0$ and $g\in \gdomain_{\klBound}$, if $ t \geq \frac{(1+c_2)(1+c_2 h^2)}{\trainsize c_2 h^2}$, then
\begin{align}\label{eq_tail}
	&\prob_\train \left(\loss_{\data}(g) -  (1+c_2) \hat{\loss}_{\train}(g)  + c_2 (1-h^2) \hat{\loss}_{\train}(g^2)\geq \frac{t}{2} \right) \leq \frac{1}{2}.
\end{align}
Then, consider another independent random data set $S'=\{z_1',\dots,z_{\trainsize}'\} \in \data^\trainsize$.
For $c>c_2$, by taking the difference of $\loss_{\data}(g) -  (1+c) \hat{\loss}_{\train}(g)  + c (1-h^2) \hat{\loss}_{\train}(g^2)$ and $\loss_{\data}(g) -  (1+c_2) \hat{\loss}_{\train'}(g)  + c_2 (1-h^2) \hat{\loss}_{\train'}(g^2)$ and using \cref{eq_tail}, we obtain 
\begin{align}\label{eq:difference}
	&\frac{1}{4}\prob_{\train} \left(\sup_{g\in \gdomain_{\klBound}} \loss_{\data}(g)  - (1+c)\hat{\loss}_{\train}(g)  + c(1-h^2)  \hat{\loss}_{\train}(g^2)  \geq t \right)  \\
	&\leq \frac{1}{2} \prob_{\train, \train'} \left(\sup_{g\in \gdomain_{\klBound}} (1+c_2)\hat{\loss}_{\train'}(g)  - c_2 (1-h^2) \hat{\loss}_{\train'}(g^2)  - (1+c) \hat{\loss}_{\train}(g)  + c(1-h^2) \hat{\loss}_{\train}(g^2) \geq \frac{t}{2} \right) .
\end{align}
Now by writing $(1+c_2)\hat{\loss}_{\train'}(g)  - c_2 (1-h^2) \hat{\loss}_{\train'}(g^2)  - (1+c) \hat{\loss}_{\train}(g)  + c(1-h^2) \hat{\loss}_{\train}(g^2)$ as
\begin{align}
	&(1+\frac{c+c_2}{2} ) \left(\hat{\loss}_{\train'}(g)-\hat{\loss}_{\train}(g)\right) - \frac{c+c_2}{2} (1-h^2) \left(\hat{\loss}_{\train'}(g^2)-\hat{\loss}_{\train}(g^2) \right) \notag \\
	&\qquad\qquad - \frac{c-c_2}{2}\hat{\loss}_{\train}\left( g - (1-h^2) g^2 \right) - \frac{c-c_2}{2}\hat{\loss}_{\train'}\left( g - (1-h^2) g^2 \right),
\end{align}
one can apply the symmetrization argument to get
	\begin{align}
	&\frac{1}{4} \prob_\train \left(\sup_{g\in \gdomain_{\klBound}} \loss_{\data}(g) - (1+c) \hat{\loss}_{\train}(g) + c(1-h^2)\hat{\loss}_{\train}(g^2)   \geq t \right) \notag \\
	&\leq \prob_{\train, \bepsilon}  \left( \sup_{g\in \gdomain_{\klBound}} \left[ \frac{1}{\trainsize}\sum_{i=1}^\trainsize \epsilon_i \left( \left(1+c'\right)g(z_i) -  c'(1-h^2) g^2(z_i)  \right) - c'' \hat{\loss}_{\train}\left( g - (1-h^2) g^2 \right) \right] \geq \frac{t}{4} \right),
\end{align}
where $c'=\frac{c+c_2}{2}, c''=\frac{c-c_2}{2}$.
Therefore, we have derived the new shifted symmetrization in deviation involving a quadratic term.

Recalling the definition of $\gdomain_{\klBound}$, 
we have
\begin{align}
	&  \sup_{g\in \gdomain_{\klBound}} \frac{1}{\trainsize}\sum_{i=1}^\trainsize \epsilon_i \left( \left(1+c'\right)g(z_i) -  c'(1-h^2) g(z_i)^2 \right) - c''  \hat{\loss}_{\train}(g- (1-h^2)g^2) \notag \\
	&= \sup_{\Model \in \qdomain(\klBound)} \frac{1}{\trainsize}\sum_{i=1}^\trainsize [\epsilon_i \left(1+c'\right) - c'' ]  \gloss{\Model}{z_i} - \left[ \epsilon_i c' - c'' \right] (1-h^2)  [ \gloss{\Model}{z_i} ]^2 .
\end{align}
Note that there are two shifted Rademacher random variables $\epsilon_i \left(1+c'\right) - c''$ and $\epsilon_i c' - c''$, which not only involve a shift term $-c''$ but also scale terms $(1+c')$ and $c'$, respectively.
Furthermore, the term $[ \gloss{\Model}{z_i} ]^2$ cannot be seen as a linear function of $\Model$.
This prevents the use of the key argument in \citep{Kakade2009} to formulate an upper bound using Rademacher complexities of constrained linear classes by considering the generalization error as a linear function of $\Model$.

In order to sidestep this obstruction, 
define $\bepsilon:=\{\epsilon_i\}_{i=1}^\trainsize, \bz:=\{z_i\}_{i=1}^\trainsize$ and suppose $\hat{\Model}(\bepsilon, \bz)$ achieves the supremum above. (If the supremum cannot be achieved, one can use a carefully chosen sequence of $\{\hat{\Model}_i(\bepsilon, \bz)\}$ to prove the same statement as the supremum can be approximated arbitrarily closely.)
The following inequality then holds:
\begin{align}
	& \sup_{\Model \in \qdomain(\klBound)} \frac{1}{\trainsize}\sum_{i=1}^\trainsize [\epsilon_i \left(1+c'\right) - c'' ]  \gloss{\Model}{z_i} - \left[ \epsilon_i c' - c'' \right] (1-h^2)  [ \gloss{\Model}{z_i} ]^2 \notag \\
	&\leq \sup_{\Model \in \qdomain(\klBound)} \frac{1}{\trainsize}\sum_{i=1}^\trainsize [\epsilon_i \left(1+c'\right) - c'' ]  \gloss{\Model}{z_i} - \left[ \epsilon_i c' - c'' \right] (1-h^2)  \gloss{\Model}{z_i} \gloss{\hat{\Model}(\bepsilon, \bz)}{z_i} .
\end{align}
To see this, note that, on the one hand, 
if we plug in $\Model=\hat{\Model}(\bepsilon, \bz)$ the inequality is tight; 
on the other hand, by definition, $\Model=\hat{\Model}(\bepsilon, \bz)$ already achieves the supremum of the l.h.s.
Note that the r.h.s.\ can be seen as a linear function of $\Model$, 
because $\hat{\Model}(\bepsilon, \bz)$ is a random variable which does not depend on $\Model$.

Let $	\epsilon_i'' := \epsilon_i c' - c'' =  \epsilon_i \frac{c_1 + c_2}{2} - \frac{c_1 - c_2}{2}$.
Then by keeping the term $\hat{\Model}(\bepsilon, \bz)$, one can apply the convex conjugate of relative entropy to get
\begin{align}\label{eq:ssyssyssy}
	&\prob\left[ \sup_{\Model \in \qdomain(\klBound)}\GibbsRisk{\Model} - (1+c)  \GibbsEmpRisk{\Model} + \frac{c(1-h^2)}{\trainsize} \sum_{i=1}^\trainsize (\gloss{\Model}{z_i})^2  \geq t \right] \notag \\ 
	&\leq  4\exp \left( \klBound - \frac{\lambda t}{4} \right) \expect_{\train} \expect_{\bepsilon} \expect_{\Truth} \left[ \exp \left( \frac{\lambda}{\trainsize}\sum_{i=1}^\trainsize  f(z_i)  \left[ \left( \epsilon_i + \epsilon_i'' \right)  - \epsilon_i''  (1-h^2)\gloss{\hat{\Model}(\bepsilon, \bz)}{z_i}\right] \right) \right] .
\end{align}
Therefore, the problem turns to bounding the expectation of a function involving shifted Rademacher processes.
Although the expectation looks quite complicated since it involves two scaled and shifted Rademacher variables as well as the unknown $\hat{\Model}(\bepsilon, \bz)$,
fortunately, we are able to show that, for any random variables $Y_i \in [0, 1]$, we have
\begin{align}
\expect_{\train} \expect_{\bepsilon} \expect_{\Truth} \left[ \exp \left( \frac{\lambda}{\trainsize}\sum_{i=1}^\trainsize  f(z_i)  \left[ \left( \epsilon_i + \epsilon_i'' \right)  - \epsilon_i''  (1-h^2) Y_i \right] \right) \right] \le 1,
\end{align}
if $h \in (0, 1], 1 > h^2 c > c_2 > 0$ and $0 < \frac{\lambda}{\trainsize} < C=\frac{h^2 c-c_2}{2(1+h^2 c)(1+c_2)}$.
This result removes the term $\hat{\Model}(\bepsilon, \bz)$ by letting $Y_i= \gloss{\hat{\Model}(\bepsilon, \bz)}{z_i}$.
Finally, we combine different values of $\klBound$ by a union bound argument similar to the proof of \cref{thm_match_Catoni} to complete the proof.
\end{proof}

\subsection{Comparison with Catoni's Bound}\label{subsection_compare_Catoni}

As we have shown in \Cref{thm_match_Catoni}, using shifted Rademacher processes, we can match Catoni's fast-rate PAC-Bayesian bound (\Cref{thm:catoni-pac-bayes}) up to constants.
We have also presented a new fast-rate PAC-Bayes bound based on "flatness".
Although both our bound and Catoni's bound show fast $\bigO(\trainsize^{-1})$ rates of convergence, 
our bound can exploit flatness in the posterior distribution.

In particular, our PAC-Bayes bound based on flatness (\cref{eq_newbound}) can be much tighter than Catoni's bound (\cref{eq_match_Catoni})
when the posterior is chosen to concentrate on a ``flat minimum'' where $\frac{c}{\trainsize} \sum_{i=1}^\trainsize \expect_{\Model} [f(z_i) - (1+h) \gloss{\Model}{z_i}]^2 $ is very small yet $\GibbsEmpRisk{\Model}$ is nonzero.
It can be verified that the ``flatness'' term 
$\frac{c}{\trainsize} \sum_{i=1}^\trainsize \expect_{\Model} [f(z_i) - (1+h) \gloss{\Model}{z_i}]^2 $ in \cref{eq_newbound} 
is smaller than the excess empirical risk term $c \GibbsEmpRisk{\Model}$ 
when $\frac{1-h^2}{\trainsize} \sum_{i=1}^\trainsize (\gloss{\Model}{z_i})^2$ is greater than $0$, which is precisely when the empirical risk is greater than zero. 
(See \cref{eq:alternate-form}.)


Based on this observation, we expect our bound to be tighter for sufficient flat posteriors, nonzero empirical risk, and sufficient training data.
In order to see this, note that Catoni's bound has the form
$(1+c_c) \hat{\mathcal{L}}_S{Q} + \frac{\mathcal{C}_{c}}{m}( \mathrm{KL}(Q \| P) + \log\frac{1}{\delta})$,
while our bound based on \cref{eq:alternate-form} can be written  
$(1+c_r) \hat{\mathcal{L}}_S{Q} - \frac{c_r (1-h^2)}{m} \sum_{i=1}^m (\mathbb{E}_{Q}f(z_i))^2 + \frac{\mathcal{C}_{r}}{m}( \mathrm{KL}(Q \| P) + \log\frac{1}{\delta} + 1)$.
Here $c_c, c_r$ inflate the empirical risk and $\mathcal{C}_c, \mathcal{C}_r$ are constants. Let  $T_m$ be $\frac{c_r (1-h^2)}{m} \sum_{i=1}^m (\mathbb{E}_{Q}f(z_i))^2$. Note that $c_c$ and $c_r$ must be fixed before seeing the data.
Assuming we equate the inflation of the empirical risk, i.e., $c_c = c_r$, the proposed bound is tighter than Catoni's bound provided
$	m  > \frac{1}{T_m}\left((\mathcal{C}_{r}- \mathcal{C}_{c})\left( \mathrm{KL}(Q \| P) + \log\frac{1}{\delta}\right) +\mathcal{C}_{r} \right)$. 
If $T_m$ converges to a positive number (a reasonable assumption), 
then our proposed bound will be tighter for sufficiently many samples.
If we assume $c_c \neq c_r$, our bound can still be tighter than Catoni's bound under more involved conditions.

\section{Related Work}

There is a large literature on obtaining fast $1/\trainsize$ convergence rates for generalization error and excess risk 
using Rademacher processes and their generalizations \citep{Bartlett2005,Koltchinskii2006,Lecue2012,Liang2015,Zhivotovskiy2018}. 
As far as we know, this literature does not connect with the PAC-Bayesian literature.
There do exist, however, PAC-Bayesian analyses for specific learning algorithms that achieve fast rates \citep{Audibert2009,Lacasse2007,Germain2015}. 
These specific analyses do not lead to general PAC-Bayes bounds, like those produced by \citet{Catoni2007}.

Our new PAC-Bayes bound based on flatness bears a superficial resemblance to a number of bounds
in the literature. However, our notion of flatness is \emph{not} related to the variance of the randomized classifier caused by the randomness of the observed data. 
Therefore, our new bound is fundamentally different from existing PAC-Bayes bounds based on this type of variance \citep{Lacasse2007,Germain2015,tolstikhin2013pac}.  

For example, \citet[][Thm.~4]{tolstikhin2013pac} presents a generalization bound based on the ``empirical variance'', which is distinct from our "flatness". 
The ``empirical variance'' is $\expect_{\Model} \frac{1}{\trainsize} \sum_{i=1}^\trainsize  [f(z_i) -  \frac{1}{m}\sum_{i=1}^m f(z_i)]^2$, 
while our ``flatness'' is $\frac{1}{\trainsize} \sum_{i=1}^\trainsize \expect_{\Model} [f(z_i) -  \gloss{\Model}{z_i}]^2$. 
Note that it is possible for flatness to be zero, even when empirical variance is large.

To the best of our knowledge, the closest work to ours in the literature is that by \citet{Audibert2009}. 
The bound given in \citep[Thm.~6.1]{Audibert2009} uses a notion similar to our ``flatness''. 
The bound is, however, not comparable with ours for several reasons: 
First, \citep[Theorem 6.1]{Audibert2009} holds only for the particular algorithm proposed by \citeauthor{Audibert2009}, and so it is not a general PAC-Bayes bound like ours.
Second, our notion of ``flatness'' is empirical, while the ``flatness'' term in \citep[Theorem 6.1]{Audibert2009} is defined by an expectation over the data distribution, which is often presumed unknown. 
Finally, the proof techniques used to establish \citep[Theorem 6.1]{Audibert2009} are specialized to the proposed algorithm and not based on the use of Rademacher processes. 
Our proof techniques via shifted Rademacher processes provides a blueprint for other approaches to deriving fast-rate PAC-Bayes bounds. 

\citet{Grunwald2017} establish new excess risk bounds in
terms of a novel complexity measure based on ``luckiness'' functions.
In the setting of randomized classifiers, particular choices of luckiness functions can be related to PAC-Bayesian notions of complexity based on ``priors''.
Indeed, in this setting, their complexity measure can be bounded in terms of a KL divergence, as in PAC-Bayesian bounds. 
In a setting with deterministic classifiers, 
the authors show that their complexity measure can be  bounded in terms of Rademacher complexity. 
Thus, while their framework connects with both PAC-Bayesian and Rademacher-complexity bounds, 
it is not immediately clear whether it produces direct connections, as we have accomplished here.
It is certainly interesting to consider whether our bounds can be achieved (or surpassed) by
an appropriate use of their framework.

%


\section{Conclusion}

In this paper we exploit the connections between modern PAC-Bayesian theory and Rademacher complexities. Using shifted Rademacher processes \citep{Wegkamp2003,Lecue2012,Zhivotovskiy2018}, 
we derive a novel fast-rate PAC-Bayes bound that depends on the empirical "flatness" of the posterior.
Our work provides new insights on PAC-Bayesian theory and opens up new avenues for developing stronger bounds.

It is worth highlighting some potentially interesting directions
that may be worth further investigation:


We have ``rederived'' Catoni's bound via shifted Rademacher processes, up to constants.
It is interesting to ask whether the Rademacher approach can dominate the direct PAC-Bayes bound.
In the other direction, we have not derived our flatness bound via a direct PAC-Bayes approach.
Whether this is possible and what it achieves might shed light on the relative strengths of these two distinct approaches to PAC-Bayes bounds. 
It may also be interesting to pursue PAC-Bayes bounds via some adaptation of Talagrand's concentration inequalities \citep[Ch.3]{Wainwright2019}.


We have derived PAC-Bayes bounds for zero--one loss.
While the extension to bounded loss is straightforward,
the problem of extending our approach to unbounded loss relates to a growing body of work on this problem within the PAC-Bayesian framework. (See, for example, \citep{Alquier2018} and the references therein).
Whether the Rademacher perspective is helpful or not in this regard is not clear at this point.

There has been a surge of interest in PAC-Bayes bounds and their application to the study of generalization in large-scale neural networks. One promising direction is to consider Rademacher-process techniques may aid in the development of PAC-Bayesian analyses of specific algorithms \citep{Audibert2009,Lacasse2007,Germain2015},
especially in the case when the algorithms are related to large-scale neural networks trained by stochastic gradient descent \citep{Neyshabur2017,Neyshabur2017a,London2017}.

It would be interesting to perform a careful empirical study of our flatness bound in the context of large-scale neural networks, in the vein of the work of \citet{Dziugaite2017}. 
Preliminary work suggests that the posteriors found by PAC-Bayes bound optimization are not flat in our sense. After some investigation, we believe the reason is that optimizing the PAC-Bayes bound results in 
underfitting, due in part to the distribution-independent prior. 
It would be interesting to compare various PAC-Bayes bounds 
under strict constraints on the empirical risk.





\subsubsection*{Acknowledgments}

We would like to also thank
Peter Bartlett,
Gintare Karolina Dziugaite,
Roger Grosse,
Yasaman Mahdaviyeh,
Zacharie Naulet, 
and
Sasha Rakhlin 
for helpful discussions. 
In particular, the authors would like to thank Sasha Rakhlin for introducing us to the work of \citet{Kakade2009}.
The work benefitted also from constructive feedback from anonymous referees.
JY was supported by 
an Alexander Graham Bell Canada Graduate Scholarship (NSERC CGS D),
Ontario Graduate Scholarship (OGS), and
Queen Elizabeth II Graduate Scholarship in Science and Technology (QEII-GSST).
SS was supported by 
a Borealis AI Global Fellowship Award,
Connaught New Researcher Award, 
and Connaught Fellowship.
DMR was supported by an NSERC Discovery Grant and Ontario Early Researcher Award.

\printbibliography

\appendix

\section{Proofs}

\subsection{Proof of \cref{thm_match_Catoni}}\label{proof_thm_match_Catoni}

To match Catoni's bound, we need to control $\sup_{f \in \fdomain} \Risk{f} - (1+c) \EmpRisk{f}$, given $c>0$. We apply an existing result due to \citet{Zhivotovskiy2018}, which we quote here:
\begin{lemma}[Shifted Symmetrization in deviation {\citep[][Cor.~7]{Zhivotovskiy2018}}]\label{lem:symmetrization}
	Fix constants $c >c_2 > 0$, 
	\begin{align}
		&\frac{1}{4} \prob_\train \left(\sup_{f \in \fdomain} \Risk{f} - (1+c) \EmpRisk{f}\geq t \right) \notag \\
		&\qquad \leq \prob_{\train, \epsilon}  \left( \sup_{f \in \fdomain} \left[ \frac{1+c'/2}{\trainsize}\sum_{i=1}^\trainsize  \left( \epsilon_i-\frac{c'/2}{1+c'/2}\right) f(z_i)   \right] \geq \frac{t'}{2} \right).
	\end{align}
	where $c'=\frac{c-c_2}{1+c_2}, t'=\frac{t}{2(1+c_2)}$, and $\{\epsilon_i\}$ are Rademacher random variables.
\end{lemma}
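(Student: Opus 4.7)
The approach: I would prove this as a shifted variant of the classical ghost-sample symmetrization-in-deviation argument. The two ingredients that distinguish it from the textbook case are (i) a one-sided second-moment concentration on an independent ghost sample that exploits the self-bounding property $\mathrm{Var}(f(z)) \leq \Risk{f}$ of $[0,1]$-valued losses, and (ii) a rewriting of the mismatched inflation factors $(1+c)$ and $(1+c_2)$ via the identity $(1+c) = (1+c_2)(1+c')$ with $c' = (c-c_2)/(1+c_2)$, so that the ``extra'' multiplicative slack becomes a deterministic shift on the Rademacher variables rather than an additive slow-rate term.

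First I would introduce an independent ghost sample $S' = (z_1', \ldots, z_\trainsize') \sim \data^\trainsize$ and show that, for every fixed $f \in \fdomain$, $\prob_{S'}\bigl((1+c_2)\EmpRisk[S']{f} \geq \Risk{f} - t/(2(1+c_2))\bigr) \geq 1/2$ whenever $t$ exceeds a threshold of order $\bigO(1/\trainsize)$ (which can be absorbed into an additive constant in the main theorem). This is Chebyshev applied to $\EmpRisk[S']{f}$ with $\mathrm{Var}(\EmpRisk[S']{f}) \leq \Risk{f}/\trainsize$, followed by the AM--GM estimate $(c_2\Risk{f} + u)^2 \geq 4 c_2 u \Risk{f}$, which collapses the $\Risk{f}$-dependence. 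On the event $A = \{\sup_f \Risk{f} - (1+c)\EmpRisk{f} \geq t\}$, applying this to a measurable near-maximizer $\hat f = \hat f(S)$ conditionally on $S$ yields, with probability at least $1/2$ over $S'$, $(1+c_2)\EmpRisk[S']{\hat f} - (1+c)\EmpRisk{\hat f} \geq t' := t/(2(1+c_2))$, whence $\prob(A)/2 \leq \prob_{S,S'}\bigl(\sup_f (1+c_2)\EmpRisk[S']{f} - (1+c)\EmpRisk{f} \geq t'\bigr)$.

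Next I would symmetrize. Using $(1+c) = (1+c_2)(1+c')$, factor out $(1+c_2)$ and rewrite $\EmpRisk[S']{f} - (1+c')\EmpRisk{f}$ as $(1+c'/2)(\EmpRisk[S']{f} - \EmpRisk{f}) - (c'/2)(\EmpRisk[S']{f} + \EmpRisk{f})$. Each summand is then of the form $[(1+c'/2)\epsilon_i - c'/2]\,f(z_i') - [(1+c'/2)\epsilon_i + c'/2]\,f(z_i)$ once exchangeability of $(z_i, z_i')$ is used to introduce independent Rademacher signs $\epsilon_i$. Upper-bounding the supremum over $\fdomain$ by the sum of a supremum over the $z_i'$-terms alone and a supremum over the $z_i$-terms alone, and invoking $\epsilon_i \equaldist -\epsilon_i$ together with $S \equaldist S'$, the two pieces are equidistributed, so a union bound contributes another factor of $1/2$ at deviation level $t'/2$. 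The surviving process is exactly the shifted Rademacher expression of the statement, and the two factors of $1/2$ (ghost swap and sup split) combine into the $1/4$ prefactor.

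The main obstacle is the Chebyshev/AM--GM step: without inflation (i.e.\ $c_2 = 0$), a $\sqrt{\Risk{f}/\trainsize}$-scale variance deviation cannot be converted into a shift of order $1/\trainsize$, so an unshifted symmetrization would lose the fast-rate character and force $c_2 > 0$. The only other subtle point is the measurable selection of $\hat f(S)$, which is standard and can be handled by a limiting argument when the supremum is not attained. Everything else is a sequence of exchangeability and union-bound manipulations.
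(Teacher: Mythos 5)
Your reconstruction is correct and takes essentially the same route as the paper's own proof of the generalization (\cref{lem:shifted-sym-eq} and \cref{coro:shifted-rademacher-ineq}, of which this lemma is the $h=1$ special case): a ghost-sample one-sided Chebyshev step combined with the AM--GM estimate $(c_2\Risk{f}+u)^2\geq 4c_2u\Risk{f}$, followed by factoring $(1+c)=(1+c_2)(1+c')$, splitting $\EmpRisk[S']{f}-(1+c')\EmpRisk{f}$ into a symmetric and an antisymmetric (shift) part, symmetrizing, and separating the $S$- and $S'$-pieces with a union bound. The paper cites this lemma from Zhivotovskiy rather than proving it, and you rightly note that the argument requires $t\geq (1+c_2)^2/(\trainsize c_2)$, a threshold absent from the quoted statement but imposed later when the paper applies it (and present in the analogous \cref{lem:shifted-sym-eq}/\cref{coro:shifted-rademacher-ineq}).
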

Let $\epsilon_i':=\epsilon_i-\frac{c'}{2+c'}$. For $\klBound > 0$, define $\qdomain(\klBound):=\{\Model: \KL{\Model}{\Truth} \leq \klBound\}$. Next, using convex conjugate of the Kullback–Leibler divergence (the change-measure inequality), for $\lambda>0$,
\begin{align}
	\sup_{\Model\in\qdomain(\kappa)}\left[\frac{1}{\trainsize}\sum_{i=1}^{\trainsize}\epsilon_i'\gloss{\Model}{z_i}-\frac{1}{\lambda}\KL{\Model}{\Truth}\right]\le \frac{1}{\lambda}\log \expect_{P}\left[\exp\left(\frac{\lambda}{\trainsize}\sum_{i=1}^{\trainsize}\epsilon_i' f(z_i)\right)\right],
\end{align}
which implies
\begin{align}
	\sup_{\Model\in\qdomain(\klBound)}\frac{1}{\trainsize}\sum_{i=1}^{\trainsize}\epsilon_i'\gloss{\Model}{z_i}
	\le \frac{\klBound}{\lambda}+\frac{1}{\lambda}\log \expect_{P}\left[\exp\left(\frac{\lambda}{\trainsize}\sum_{i=1}^{\trainsize}\epsilon_i' f(z_i)\right)\right].
\end{align}
Therefore, we have
\begin{align}
	&\Pr_{\train}\left(\sup_{\Model\in\qdomain(\klBound)}\expect_{\Model}[\Risk{f}-(1+c)\EmpRisk{f}]\ge t \right)\\
	&\qquad =\Pr_{\train}\left(\sup_{\Model\in\qdomain(\klBound)} \Risk{\expect_{\Model} f}-(1+c)\EmpRisk{\expect_{\Model} f} \ge t \right)\\
	&\qquad \le 4\Pr_{\train,\bepsilon}\left(\sup_{\Model\in\qdomain(\klBound)} \frac{1}{\trainsize}\sum_{i=1}^{\trainsize}(\epsilon_i-\frac{c'}{2+c'})\gloss{\Model}{z_i}\ge \frac{t'}{2+c'}\right)\\
	&\qquad\le 
	4\Pr_{\train,\bepsilon}\left(\frac{\klBound}{\lambda}+\frac{1}{\lambda}\log \expect_{P}\left[\exp\left(\frac{\lambda}{\trainsize}\sum_{i=1}^{\trainsize}\epsilon_i' f(z_i)\right)\right]\ge \frac{t'}{2+c'}\right)\\
	&\qquad =4\Pr_{\train,\bepsilon}\left(\log \expect_{P}\left[\exp\left(\frac{\lambda}{\trainsize}\sum_{i=1}^{\trainsize}\epsilon_i' f(z_i)\right)\right]\ge \frac{\lambda t'}{2+c'}-\klBound\right)\\
	&\qquad =4\Pr_{\train,\bepsilon}\left( \expect_{P}\left[\exp\left(\frac{\lambda}{\trainsize}\sum_{i=1}^{\trainsize}\epsilon_i' f(z_i)\right)\right]\ge \exp\left( \frac{\lambda t'}{2+c'}-\klBound\right)\right)\\
	&\qquad \underbrace{\le}_{\textrm{Markov}} 4\exp\left(\klBound- \frac{\lambda t'}{2+c'}\right)\expect_{\train}\expect_{\bepsilon}\expect_{P}\left[\exp\left(\frac{\lambda}{\trainsize}\sum_{i=1}^{\trainsize}\epsilon_i' f(z_i)\right)\right].
\end{align}
Now we use the result of \cref{lemma_debias}. For any $c_2$ such that $0<c_2<c$, if $t\ge \frac{1}{\trainsize}\frac{(1+c_2)^2}{c_2}$ and $\frac{c'}{c'+2}\ge \frac{\log\cosh(\lambda/\trainsize)}{\lambda/\trainsize}$ then we have
\begin{align}
	\Pr\left(\sup_{\Model\in\qdomain(\klBound)}\expect_{\Model}[\Risk{f}-(1+c)\EmpRisk{f}]\ge t \right)\le 4\exp\left(\klBound- \frac{\lambda t'}{2+c'}\right), 
\end{align}
where $c'=\frac{c-c_2}{1+c_2}$ and $t'=\frac{t}{2(1+c_2)}$.
Now letting $4\exp\left(\klBound- \frac{\lambda t'}{2+c'}\right)$ equals to $\delta$, we have
\begin{align}
	t=2(1+c_2)t'&=2(1+c_2)\left[\frac{2+c'}{\lambda}\left(\klBound+\log(4/\delta)\right)\right].
\end{align}
Now let $\frac{\lambda}{\trainsize}=C$, noting that 
\begin{align}
	t\ge 2(1+c_2)\left[\frac{2+c'}{\lambda}\log(4/\delta)\right]
\end{align}
then we can choose $C$ small enough (clearly bounded away from $0$) to satisfy
\begin{align}
	\frac{\log\cosh(C)}{C}\le \frac{c'}{c'+2},\quad C\le \frac{2(1+c_2)(2+c')\log(4/\delta)}{(1+c_2)^2/c_2}
\end{align}
which guarantees both $t\ge \frac{1}{\trainsize}\frac{(1+c_2)^2}{c_2}$ and $\frac{c'}{c'+2}\ge \frac{\log\cosh(\lambda/\trainsize)}{\lambda/\trainsize}$.

Therefore, using such $C$ we have
\begin{align}
	\Pr\left(\sup_{\Model\in\qdomain(\klBound)}\expect_{\Model}[\Risk{f}-(1+c)\EmpRisk{f}]\ge \frac{2(1+c_2)(2+c')}{C\trainsize}(\klBound+\log(4/\delta)) \right)\le \delta,
\end{align}
which implies
\begin{align}
	\sup_{\Model\in\qdomain(\klBound)}\expect_{\Model}[\Risk{f}-(1+c)\EmpRisk{f}]\le \frac{C'}{\trainsize}(\klBound+\log(4/\delta)),\quad\textrm{w.p.}\quad 1-\delta.
\end{align}
where $C'=\frac{2(1+c_2)(2+c')}{C}$. 

Finally, we combine all possible $\klBound$ using a union bound. Define $\Gamma_0:=\{\Model: \KL{\Model}{\Truth}\le 2 \}$ and  $\Gamma_j:=\{\Model: \KL{\Model}{\Truth}\in [2^j, 2^{j+1}]\}$ for $j\ge 1$. Let $\delta_j=2^{-(j+1)}\delta$ so $\sum_{j=0}^\infty\delta_j=\delta$. Then for any $\Model$ there is a $j_\Model$ such that $\Model\in \Gamma_{j_\Model}$. Then by definition we have
\begin{align}
	&\KL{\Model}{\Truth}\le 2^{j_\Model+1}\le 2\max\{\KL{\Model}{\Truth},1\}\\ &\delta_{j_\Model}=2^{-(j_\Model+1)}\delta\ge \frac{\delta}{2\max\{\KL{\Model}{\Truth},1\}}.
\end{align}
Therefore, we have that for any $\Model$, with probability $1-\delta$, we have
\begin{align}
	\GibbsRisk{\Model}-(1+c)\GibbsEmpRisk{\Model}&\le  \frac{C'}{\trainsize}\left(2\max\{\KL{\Model}{\Truth},1\}+\log(8\max\{\KL{\Model}{\Truth},1\}/\delta)\right)\\
	& \le \frac{C'}{\trainsize}\left(2\max\{\KL{\Model}{\Truth},1\}+\log(\max\{\KL{\Model}{\Truth},1\})+\log(8/\delta)\right).
\end{align}
Now we simplify the order without optimizing the constants, which gives
\begin{align}
	\log(\max\{\KL{\Model}{\Truth},1 \})\le \max\{\KL{\Model}{\Truth},1\}\le \KL{\Model}{\Truth}+1.
\end{align}
Therefore, we have
\begin{align}
	\Risk{f}-(1+c)\EmpRisk{f}&\le  \frac{C_1}{\trainsize}\KL{\Model}{\Truth} +\frac{C_2}{\trainsize}\log(1/\delta) +\frac{C_3}{\trainsize},\quad\textrm{w.p.}\quad 1-\delta.
\end{align}

\begin{lemma}\label{lemma_debias} If $k\ge \frac{\log \cosh(\lambda/\trainsize)}{\lambda/\trainsize}$, then
	\begin{align}
		\expect_{P}\expect_{\train}\expect_{\bepsilon}\left[\exp\left(\frac{\lambda}{\trainsize}\sum_{i=1}^{\trainsize}(\epsilon_i-k) f(z_i)\right)\right]\le 1.
	\end{align}
\end{lemma}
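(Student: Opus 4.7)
The plan is to exploit independence structure aggressively and reduce the claim to the elementary Rademacher moment generating function identity $\EE[e^{\alpha\epsilon}] = \cosh \alpha$. Let $\alpha = \lambda/\trainsize$ for brevity. First I would fix $f$ (drawn from the prior $P$) and compute the inner $\expect_{\train}\expect_{\bepsilon}$-expectation. Since the pairs $(z_i,\epsilon_i)$ are jointly independent across $i$ and $f$ is fixed, the exponential of a sum factors as a product:
\[
\expect_{\train}\expect_{\bepsilon}\left[\exp\left(\alpha\sum_{i=1}^{\trainsize}(\epsilon_i-k) f(z_i)\right)\right]
= \prod_{i=1}^{\trainsize}\expect_{z_i,\epsilon_i}\left[\exp\left(\alpha(\epsilon_i-k)f(z_i)\right)\right].
\]

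Next I would evaluate each factor. Because $f$ is $\{0,1\}$-valued (zero--one loss), the integrand equals $1$ when $f(z_i)=0$ and equals $\exp(\alpha(\epsilon_i-k))$ when $f(z_i)=1$. Writing $p_f \defas \Risk{f}=\Pr(f(z)=1)$ and averaging first over $\epsilon_i$ then over $z_i$,
\[
\expect_{z_i,\epsilon_i}\left[\exp\left(\alpha(\epsilon_i-k)f(z_i)\right)\right]
= (1-p_f) + p_f\, e^{-\alpha k}\cosh(\alpha)
= 1 + p_f\bigl(e^{-\alpha k}\cosh(\alpha)-1\bigr).
\]

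Now the hypothesis $k \ge \log\cosh(\alpha)/\alpha$ is exactly equivalent to $e^{-\alpha k}\cosh(\alpha) \le 1$, so the bracketed quantity above is non-positive. Since $p_f\in[0,1]$, each factor is bounded above by $1$ and hence the product over $i$ is bounded above by $1$ as well. Taking expectation over $f\sim P$ (allowed by Tonelli, as the integrand is non-negative) preserves this bound and yields the claim. The argument has no real obstacle; the only thing to verify carefully is that the hypothesis on $k$ is precisely the threshold at which the shift $-k$ cancels the Rademacher MGF $\cosh(\alpha)$, and that the $\{0,1\}$-valuedness of $f$ reduces the per-coordinate computation to a convex combination of $1$ and $e^{-\alpha k}\cosh(\alpha)$.
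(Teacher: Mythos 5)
Your proposal is correct and follows essentially the same route as the paper's proof: factor the expectation over $i$ by independence, use the $\{0,1\}$-valuedness of $f$ to reduce each factor to a convex combination of $1$ and $e^{-\alpha k}\cosh(\alpha)$, and observe that the hypothesis on $k$ is exactly $e^{-\alpha k}\cosh(\alpha)\le 1$. The only cosmetic difference is the order of the per-coordinate computation (you condition on $f(z_i)$ before averaging over $\epsilon_i$, while the paper takes the Rademacher average first and then uses the Bernoulli structure), which is immaterial.
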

\begin{proof}
	Noting that $\{f(z_i)\}$ are independent Bernoulli random variables, we have 
	\begin{align}
		& \expect_{P}\expect_{\train}\expect_{\epsilon}\left[\exp\left(\frac{\lambda}{\trainsize}\sum_{i=1}^{\trainsize}(\epsilon_i-k) f(z_i)\right)\right]\\
		&\qquad\underbrace{=}_{\textrm{indep}} \expect_{P}\prod_{i=1}^{\trainsize}\expect_{\train}\expect_{\epsilon_i}\left[\exp\left(\frac{\lambda}{\trainsize}(\epsilon_i-k) f(z_i)\right)\right]\\
		&\qquad\underbrace{=}_{\textrm{Rademacher}} \expect_{P}\prod_{i=1}^{\trainsize}\expect_{\train}\left[\frac{e^{(1-k)\frac{\lambda}{\trainsize} f(z_i)}+e^{-(1+k)\frac{\lambda}{\trainsize} f(z_i)}}{2}\right]\\
		&\qquad\underbrace{=}_{\textrm{Bernoulli}} \expect_{P}\prod_{i=1}^{\trainsize}\left[(1-\expect_{\train}[f(z_i)])+\left(\frac{e^{\frac{\lambda}{\trainsize}}+e^{-\frac{\lambda}{\trainsize}}}{2e^{k\frac{\lambda}{\trainsize}}}\right)\expect_{\train}[f(z_i)]\right]\\
		&\qquad= \expect_{P}\prod_{i=1}^{\trainsize}\left[(1-\expect_{\train}[f(z_i)])+\frac{\cosh\left(\frac{\lambda}{\trainsize}\right)}{e^{k\frac{\lambda}{\trainsize}}}\expect_{\train}[f(z_i)]\right],
	\end{align}
	which is upper bounded by $1$ if we choose $k$ such that
	\begin{align}
		\cosh(\lambda/\trainsize)=\frac{e^{\lambda/\trainsize}+e^{-\lambda/\trainsize}}{2}\le e^{k\lambda/\trainsize}.
	\end{align}
	That is
	\begin{align}
		k\ge \frac{\log \cosh(\lambda/\trainsize)}{\lambda/\trainsize}.
	\end{align}
\end{proof}

\subsection{Proof of \cref{theorem_new_bound}}\label{proof_theorem_new_bound}

We first present some lemmas that will be used in the later proof.

\begin{lemma}[Shifted-Flatness Inequality]\label{lem:shifted-sym-eq}
	Consider a function $f : \domain \to [0, 1]$, constants $ h \in [0, 1]$ and $c_2 > 0$, if $ t \geq \frac{(1+c_2)(1+c_2 h^2)}{\trainsize c_2 h^2}$, we have 
	\begin{align}
		&\prob_\train \left(\Risk{f} -  (1+c_2) \EmpRisk{f}  + c_2 (1-h^2) \EmpRisk{f^2}\geq \frac{t}{2} \right) \leq \frac{1}{2}.
	\end{align}
\end{lemma}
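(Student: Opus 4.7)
The plan is a direct second-moment argument preceded by a pointwise reduction that exploits $f \in [0,1]$. Write $\mu := \Risk{f}$ and
$$V := \Risk{f} - (1+c_2)\EmpRisk{f} + c_2(1-h^2)\EmpRisk{f^2}.$$
Since $f^2 \leq f$ pointwise, we have $\EmpRisk{f^2} \leq \EmpRisk{f}$, which ``flattens away'' the quadratic term:
$$V \leq V' := \mu - (1+c_2 h^2)\EmpRisk{f}.$$
Once this reduction is made, $V'$ is a simple affine function of the i.i.d.\ sample mean $\EmpRisk{f}$ and can be analyzed with a standard second-moment bound.

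Using $\expect[f(z)] = \mu$ and $\Var(f(z)) \leq \mu(1-\mu) \leq \mu$ (both from $f \in [0,1]$), I get $\expect_{\train}[V'] = -c_2 h^2 \mu$ and $\Var_{\train}(V') \leq (1+c_2 h^2)^2 \mu / \trainsize$. The negative mean $-c_2 h^2 \mu$ plays the role of a fast-rate drift whose magnitude scales linearly in $\mu$, matching the Bernoulli-like variance and producing a $1/\trainsize$ threshold. Applying Chebyshev, and using $t/2 - \expect[V'] = t/2 + c_2 h^2 \mu \geq 0$,
$$\Pr_{\train}\!\left(V \geq \tfrac{t}{2}\right) \leq \Pr_{\train}\!\left(V' - \expect[V'] \geq \tfrac{t}{2}+c_2 h^2 \mu\right) \leq \frac{(1+c_2 h^2)^2\, \mu/\trainsize}{(t/2 + c_2 h^2 \mu)^2}.$$
The AM--GM inequality $(t/2 + c_2 h^2 \mu)^2 \geq 4 \cdot (t/2)(c_2 h^2 \mu) = 2 t c_2 h^2 \mu$ cancels the $\mu$-factor, yielding the $\mu$-free bound
$$\Pr_{\train}\!\left(V \geq \tfrac{t}{2}\right) \leq \frac{(1+c_2 h^2)^2}{2\, \trainsize\, t\, c_2 h^2},$$
which is $\leq 1/2$ whenever $t \geq (1+c_2 h^2)^2/(\trainsize\, c_2 h^2)$. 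Since $1+c_2 h^2 \leq 1+c_2$, this is implied by the lemma's hypothesis $t \geq (1+c_2)(1+c_2 h^2)/(\trainsize\, c_2 h^2)$.

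The only non-routine step is spotting the reduction $V \leq V'$; once in place, the rest is Chebyshev plus an AM--GM cancellation in which the linear-in-$\mu$ negative drift exactly matches the linear-in-$\mu$ variance, eliminating $\mu$-dependence from the threshold. A direct second-moment attack on $V$ is possible but forces one to control $\Var(g(z))$ for $g(z) = (1+c_2)f(z) - c_2(1-h^2)f(z)^2$, which is messier and yields no cleaner constant.
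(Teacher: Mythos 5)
Your proof is correct, and it takes a cleaner route than the paper's while landing on the same Chebyshev-plus-AM--GM core. The paper works directly with the quadratic statistic: it sets $g = f - \frac{c_2(1-h^2)}{1+c_2}f^2 \in [0,1]$, recenters $\Risk{g} - \EmpRisk{g}$ at zero so the threshold becomes $\frac{t/2 + v}{1+c_2}$ with $v = c_2 \loss_{\data}(f - (1-h^2)f^2)$, applies Chebyshev with $\Var(g) \leq \loss_{\data}(g^2) \leq \loss_{\data}(g)$, uses AM--GM on $(t/2+v)^2$, and then needs a separate algebraic decomposition of $\loss_{\data}(g)$ in terms of $v$ to cancel the $v$-dependence. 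You instead apply the pointwise bound $f^2 \leq f$ (valid since $f\in[0,1]$ and $c_2(1-h^2)\geq 0$) \emph{before} doing any probability, dominating the statistic by the affine quantity $V' = \Risk{f} - (1+c_2 h^2)\EmpRisk{f}$; then Chebyshev on the scalar mean $\EmpRisk{f}$, the Bernoulli-type bound $\Var(f)\leq\mu$, and AM--GM on $(t/2 + c_2 h^2\mu)^2$ cancel $\mu$ directly. This shortcut avoids the auxiliary function $g$ and the bookkeeping around $v$ entirely, and it even yields the slightly sharper threshold constant $(1+c_2 h^2)^2$ in place of the paper's $(1+c_2)(1+c_2 h^2)$, which you then relax to match the lemma's hypothesis. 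The one detail worth stating explicitly is the degenerate case $\mu=0$, where the AM--GM cancellation divides by $\mu$; there $f\equiv 0$ $\data$-a.s., hence $V=0$ a.s.\ and the probability is trivially zero, so the bound still holds.
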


\begin{proof}
	Let $v = c_2 \Risk{f} - c_2 (1-h^2)  \loss_{\data}(f^2) = c_2\loss_{\data}(f - (1-h^2)f^2) $. Then, we have 
	\begin{align}    
		&\prob \left(\Risk{f} -  (1+c_2) \EmpRisk{f} + c_2 (1-h^2) \EmpRisk{f^2}\geq \frac{t}{2} \right)  \\
		& = \prob \left( \Risk{f} - \frac{c_2(1-h^2)}{1+c_2} \Risk{f^2} -  \EmpRisk{f}   + \frac{c_2(1-h^2)}{1+c_2}  \EmpRisk{f^2}  \geq \frac{t/2 + v}{1+c_2} \right) \\
		&= \prob \left( \expect_{z \sim \data} (f(z) - \frac{c_2(1-h^2)}{1+c_2} f(z)^2) - \frac{1}{\trainsize} \sum_{i=1}^\trainsize (f(z_i) - \frac{c_2 (1-h^2)}{1+c_2} f(z_i)^2)   \geq \frac{t/2 + v }{1+c_2} \right).
	\end{align}
	Because $f(z_i) - \frac{c_2(1-h^2)}{1+c_2} f(z_i)^2, i=1,\dots,\trainsize$ are i.i.d.\ random samples, using Chebyshev's inequality together with $4ab \leq (a+b)^2$ and $f \in [0, 1]$, the formula above is upper bounded by
	\begin{align}
		& \frac{(1+c_2)^2 \Var (f - \frac{c_2(1-h^2)}{1+c_2} f^2) }{\trainsize (t/2 + v)^2} \leq  \frac{(1+c_2)^2 \loss_{\data} (f - \frac{c_2(1-h^2)}{1+c_2} f^2)^2 }{2 \trainsize v t} \leq \frac{(1+c_2)^2 \loss_{\data} (f - \frac{c_2(1-h^2)}{1+c_2} f^2) }{2 \trainsize v t}.
	\end{align}
	We can further decompose the term in the numerator by
	\begin{align}
		\loss_{\data} (f - \frac{c_2(1-h^2)}{1+c_2} f^2)  
		&= \frac{c_2}{1+c_2} \loss_{\data} (f - (1-h^2)f^2) + \frac{1}{1+c_2} \Risk{f} \\
		&\leq \frac{1}{1+c_2} v + \frac{1}{1+c_2} \frac{1}{c_2 h^2} v = \frac{c_2 h^2 + 1}{(1+c_2)c_2 h^2} v,
	\end{align}
	Therefore the lemma follows directly from $ t \geq \frac{(1+c_2)(1+c_2 h^2)}{\trainsize c_2 h^2}$ .
\end{proof}

\begin{lemma}[New Shifted Symmetrization in Deviation] \label{coro:shifted-rademacher-ineq} 
	Fix constants $c >c_2 > 0$, $h \in [0, 1]$, if $ t \geq \frac{(1+c_2)(1+c_2 h^2)}{m c_2 h^2}$, we have 
	\begin{align}
		&\frac{1}{4} \prob_\train \left(\sup_{f \in \fdomain} \Risk{f} - (1+c) \EmpRisk{f} + c(1-h^2) \EmpRisk{f^2}   \geq t \right) \notag \\
		&\leq \prob_{\train, \bepsilon}  \left( \sup_{f \in \fdomain} \left[ \frac{1}{\trainsize}\sum_{i=1}^\trainsize \epsilon_i \left( \left(1+c'\right)f(z_i) -  c'(1-h^2) f^2(z_i)  \right) - c'' \hat{\loss}_{\train}\left( f - (1-h^2) f^2 \right) \right] \geq \frac{t}{4} \right),
	\end{align}
	where $c'=\frac{c+c_2}{2}, c''=\frac{c-c_2}{2}$, $\bepsilon:=\{\epsilon_i\}_{i=1}^\trainsize$, in which $\{\epsilon_i\}$ are independent Rademacher random variables.
\end{lemma}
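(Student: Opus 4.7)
The plan is to combine the standard two-sample (``ghost sample'') symmetrization trick with the shifted-flatness tail bound \cref{lem:shifted-sym-eq}, arranged so that the $c''\hat{\loss}_\train(f - (1-h^2)f^2)$ penalty remains attached to the real sample $\train$ rather than being absorbed into the Rademacher sum.

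The first step is to introduce an independent ghost sample $\train' = (z_1',\dots,z_m') \sim \data^m$ and, on the event $E$ that the supremum inside the left-hand probability is at least $t$, a measurable approximate maximizer $\hat f = \hat f(\train)$. Applying \cref{lem:shifted-sym-eq} to $\hat f$ with the sample $\train'$ and constant $c_2$ yields, with probability at least $1/2$ over $\train'$,
\begin{align*}
\Risk{\hat f} - (1+c_2)\hat{\loss}_{\train'}(\hat f) + c_2(1-h^2)\hat{\loss}_{\train'}(\hat f^2) < t/2,
\end{align*}
where the hypothesis $t \geq (1+c_2)(1+c_2 h^2)/(m c_2 h^2)$ is exactly what is needed to invoke \cref{lem:shifted-sym-eq}. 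Subtracting this from the defining inequality of $E$, taking $\sup_{f\in\fdomain}$, and applying Fubini gives
\begin{align*}
\tfrac{1}{2}\Pr_\train(E) \;\leq\; \Pr_{\train,\train'}\Bigl(\sup_{f\in\fdomain} \Psi(f;\train,\train') \geq t/2\Bigr),
\end{align*}
where $\Psi := (1+c_2)\hat{\loss}_{\train'}(f) - c_2(1-h^2)\hat{\loss}_{\train'}(f^2) - (1+c)\hat{\loss}_\train(f) + c(1-h^2)\hat{\loss}_\train(f^2)$.

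The second step is an algebraic rearrangement. Writing $1+c = 1+c'+c''$ and $1+c_2 = 1+c'-c''$ and collecting terms produces
\begin{align*}
\Psi(f;\train,\train') = \tfrac{1}{m}\sum_{i=1}^m \bigl[g(z_i') - g(z_i)\bigr] - c''\hat{\loss}_{\train'}(\phi) - c''\hat{\loss}_\train(\phi),
\end{align*}
with $g(z) := (1+c')f(z) - c'(1-h^2)f(z)^2$ and $\phi := f - (1-h^2)f^2$. This is the crucial step, and the reason we are forced into the balanced splitting $c = c'+c''$, $c_2 = c'-c''$ (and hence the hypothesis $c > c_2$): its whole point is to make the $c''$-penalty symmetric in $\train$ and $\train'$, so that the subsequent split is clean.

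The final step is Rademacher symmetrization followed by a union bound. Exchangeability of $(z_i,z_i')$ implies that, as a process in $f$, $\tfrac{1}{m}\sum_i [g(z_i') - g(z_i)]$ has the same joint distribution as $\tfrac{1}{m}\sum_i \epsilon_i [g(z_i') - g(z_i)]$ for independent Rademacher signs $\epsilon_i$. Splitting at the $\pm\epsilon_i$ into a $\train'$-piece and a $\train$-piece and applying $\Pr(X+Y \geq t/2) \leq \Pr(X\geq t/4) + \Pr(Y\geq t/4)$ produces two probabilities that are equal by $\epsilon_i \equaldist -\epsilon_i$ and $\train \equaldist \train'$, each equal to the right-hand side of the claim. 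Combined with the $1/2$ from the first step this yields the desired $1/4$. The main obstacle throughout is the algebraic decomposition in step two; once $\Psi$ is cast in that symmetric form, the symmetrization and union-bound finish are routine.
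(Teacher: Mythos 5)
Your proof is correct and takes essentially the same route as the paper's: a ghost-sample comparison driven by \cref{lem:shifted-sym-eq}, the balanced split $c = c'+c''$, $c_2 = c'-c''$ chosen precisely so that the penalty $-c''\hat{\loss}_{\train}(f-(1-h^2)f^2)-c''\hat{\loss}_{\train'}(f-(1-h^2)f^2)$ is symmetric under swapping $\train\leftrightarrow\train'$, and then Rademacher symmetrization, a union bound at level $t/4$, and the symmetries $\bepsilon\equaldist-\bepsilon$ and $\train\equaldist\train'$ to collapse the two resulting pieces into the claimed right-hand side. Your explicit justification of why the balanced split is forced is a helpful clarification of a step the paper leaves largely implicit, and your sign convention $\hat{\loss}_{\train'}(g)-\hat{\loss}_{\train}(g)$ in the difference term is the correct one.
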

\begin{proof}
	Consider a random set $S'=\{z_i'\}_{i=1}^{\trainsize} \in \data^\trainsize$, in \Cref{lem:shifted-sym-eq} we have shown that
	\begin{align}
		&\prob_{\train'} \left(\Risk{f} -  (1+c_2) \EmpRisk[\train']{f} + c_2 (1-h^2)\EmpRisk[\train']{f^2} \geq \frac{t}{2} \right) \leq \frac{1}{2}.
	\end{align}
	Therefore, we can get 
	\begin{align}
		&\frac{1}{4}\prob_{\train} \left(\sup_{f \in \fdomain} \Risk{f}  - (1+c)\EmpRisk{f}  + c(1-h^2)  \EmpRisk[\train]{f^2}  \geq t \right)  \\
		&\leq \frac{1}{2} \prob_{\train, \train'} \left(\sup_{f \in \fdomain} (1+c_2) \EmpRisk[\train']{f}  - c_2 (1-h^2) \EmpRisk[\train']{f^2}  - (1+c) \EmpRisk{f}  + c(1-h^2) \EmpRisk{f^2} \geq \frac{t}{2} \right)  \\
		&= \frac{1}{2} \prob_{\train, \train'} \left[ \sup_{f \in \fdomain}
		(1+\frac{c+c_2}{2} ) \left(\EmpRisk{f}-\EmpRisk[\train']{f}\right) - \frac{c+c_2}{2} (1-h^2) \left(\EmpRisk{f^2}-\EmpRisk[\train']{f^2} \right. \right) \\
		&\left. \qquad\qquad - \frac{c-c_2}{2}\hat{\loss}_{\train'}\left( f - (1-h^2) f^2 \right) - \frac{c-c_2}{2}\hat{\loss}_{\train}\left( f - (1-h^2) f^2 \right)
		\geq \frac{t}{2} \right] \\
		&\leq \prob_{\train, \bepsilon}  \left( \sup_{f \in \fdomain} \left[ \frac{1}{\trainsize}\sum_{i=1}^\trainsize \epsilon_i \left( \left(1+c'\right)f(z_i) -  c'(1-h^2) f^2(z_i)  \right) - c'' \hat{\loss}_{\train}\left( f - (1-h^2) f^2 \right) \right] \geq \frac{t}{4} \right),
	\end{align}
	where $c'=\frac{c+c_2}{2}, c''=\frac{c-c_2}{2}$, and the last inequality is by the symmetrization argument.
\end{proof}

\begin{lemma} \label{lem:XY}
	For constants $h \in (0, 1], h^2 c > c_2 > 0$, let $C=\frac{h^2 c-c_2}{2(1+h^2 c)(1+c_2)}$, if $0 < \frac{\lambda}{\trainsize} < C$, then given independent Bernoulli random variables $X_i$, for any random variables $Y_i \in [0, 1]$, 
	\begin{align}
		\expect_{\epsilon}  \expect_{X} \expect_{Y|X, \epsilon} \left[ \exp \left( \frac{\lambda}{\trainsize}\sum_{i=1}^\trainsize  X_{i} \left[ \left( \epsilon_i + \epsilon_i'' \right) - \epsilon_i''  (1-h^2)   Y_i \right] \right) \right] \leq 1,
	\end{align}
	where $\{\epsilon_i\}$ are i.i.d.\ Rademacher random variables and $\epsilon_i''=\epsilon_i \frac{c+c_2}{2} - \frac{c-c_2}{2}$.
\end{lemma}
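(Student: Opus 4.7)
The plan is to pass to the pointwise worst-case value of $Y_i$, use independence across $i$ to factorize the outer expectation, and then verify a one-coordinate moment-generating bound. Write $\alpha := \lambda / \trainsize$ throughout.

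\textbf{Worst-case $Y_i$.} The exponent is a sum over $i$ in which each term is linear in $Y_i$ with coefficient $-\alpha X_i \epsilon_i''(1-h^2)$. Because $Y_i \in [0,1]$ and $\exp$ is monotone, the inner conditional expectation $\expect_{Y|X,\epsilon}$ is upper bounded by replacing each factor with its pointwise maximum over $y \in [0,1]$. The sign of $\epsilon_i''$ depends only on $\epsilon_i$: $\epsilon_i'' = c_2 > 0$ when $\epsilon_i = +1$, and $\epsilon_i'' = -c < 0$ when $\epsilon_i = -1$. Hence the worst case is $y = 0$ when $\epsilon_i = +1$, contributing $\exp(\alpha X_i (1 + c_2))$, and $y = 1$ when $\epsilon_i = -1$, contributing $\exp(-\alpha X_i (1 + c h^2))$.

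\textbf{Factorize and compute.} The resulting expression depends on $(X_i,\epsilon_i)$ only through coordinate $i$, so by independence of $\{(X_i,\epsilon_i)\}_{i=1}^\trainsize$ the outer expectation becomes a product of $\trainsize$ one-coordinate moments. Letting $p_i := \Pr(X_i = 1)$, a direct Bernoulli/Rademacher calculation yields
\begin{align*}
\expect_{\epsilon_i}\expect_{X_i}\bigl[\text{worst-case factor for coordinate } i\bigr] = 1 - p_i + \frac{p_i}{2}\Bigl[e^{\alpha(1+c_2)} + e^{-\alpha(1 + c h^2)}\Bigr].
\end{align*}
This quantity is at most $1$ for every $p_i \in [0,1]$ precisely when the bracketed sum is at most $2$, in which case the full product over $i$ is also at most $1$, which is the desired bound.

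\textbf{The moment-generating bound.} It remains to show $e^{\alpha(1+c_2)} + e^{-\alpha(1+c h^2)} \le 2$ whenever $\alpha < C = \tfrac{h^2 c - c_2}{2(1+h^2 c)(1+c_2)}$. Since $c_2 < h^2 c$, a first-order Taylor expansion produces the negative leading term $-\alpha(h^2 c - c_2)$, while the second-order correction is of order $\alpha^2 (1+h^2 c)(1+c_2)$; the stated constant $C$ is tuned precisely so that the first-order gain dominates the quadratic correction. A clean route is to set $\phi(\alpha) := e^{\alpha(1+c_2)} + e^{-\alpha(1+c h^2)} - 2$, verify $\phi(0)=0$, $\phi'(0) = c_2 - h^2 c < 0$, and $\phi''>0$, and then upper bound the unique positive root of $\phi$; alternatively one may combine $e^u \le 1 + u + u^2$ on the positive side with $e^{-v} \le 1 - v + v^2/2$ on the negative side and solve for $\alpha$. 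This final inequality is the main obstacle of the proof; the rest is reductions and bookkeeping. Note that at the boundary $c_2 = h^2 c$ the constant $C$ vanishes, consistent with the requirement that $c_2 < h^2 c$ in the statement.
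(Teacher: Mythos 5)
Your reduction steps are correct and match the paper's argument: the $Y_i$-dependence is eliminated by taking the pointwise worst case (the paper writes this as the substitution $Y_i \mapsto \tfrac{1-\epsilon_i}{2}$, which is the same thing, since $\epsilon_i''=c_2>0$ when $\epsilon_i=1$ and $\epsilon_i''=-c<0$ when $\epsilon_i=-1$), and the resulting product form $\prod_i\bigl[1-\mu_i+\tfrac{\mu_i}{2}\bigl(e^{\alpha(1+c_2)}+e^{-\alpha(1+h^2c)}\bigr)\bigr]$ with $\alpha=\lambda/\trainsize$ is exactly the paper's. The problem is that you stop at the step you yourself call ``the main obstacle'': you never actually establish that $e^{\alpha(1+c_2)}+e^{-\alpha(1+h^2c)}\le 2$ holds for $\alpha< C=\tfrac{h^2c-c_2}{2(1+h^2c)(1+c_2)}$. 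Analyzing the sign of $\phi'(0)$ and convexity of $\phi$ only tells you a positive root $\alpha^*$ exists; you must still show $C\le\alpha^*$, and that is where all the work lies. Moreover, your second proposed route does not, in general, give the stated constant: bounding $e^u\le 1+u+u^2$ and $e^{-v}\le 1-v+v^2/2$ produces the sufficient condition $\alpha\le\frac{h^2c-c_2}{(1+c_2)^2+\tfrac{1}{2}(1+h^2c)^2}$, and the denominator can exceed $2(1+c_2)(1+h^2c)$ (take $c_2$ small and $h^2c$ large), so this threshold can be strictly smaller than $C$ --- you would prove a weaker lemma, not the one stated.

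The paper closes this gap cleanly with the bounds $e^x\le\tfrac{1}{1-x}$ for $x<1$ and $e^{-x}\le\tfrac{1}{1+x}$ for $x>-1$. Writing $a=\alpha(1+c_2)$ and $b=\alpha(1+h^2c)$, it suffices (given $a<1$) that $\tfrac{1}{1-a}+\tfrac{1}{1+b}\le 2$, which after clearing the denominator $(1-a)(1+b)$ is equivalent to $2ab\le b-a$, i.e.\ $2\alpha(1+c_2)(1+h^2c)\le h^2c-c_2$, i.e.\ $\alpha\le C$ exactly. The reason this works and the naive Taylor bound may not is that the rational upper bounds produce a denominator that factors as $(1-a)(1+b)$, which matches the product $2(1+c_2)(1+h^2c)$ appearing in $C$. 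You should either carry out this calculation or some equivalent one that verifies the specific constant; as written, the hardest and most constant-sensitive step of the proof is missing.
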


\begin{proof}
	Note when $X_i=0$, the value of $Y_i$ has no effect onto LHS. When $X_i = 1$, 
	\begin{align}
		\left( \epsilon_i + \epsilon_i'' \right) - \epsilon_i''  (1-h^2)   Y_i  = (1+c_2)-c_2 (1-h^2) Y_i , &\text{ if } \epsilon_i=1, \\
		\left( \epsilon_i + \epsilon_i'' \right) - \epsilon_i''  (1-h^2)   Y_i  = -(1+c) + c (1-h^2) Y_i , &\text{ if } \epsilon_i=-1,
	\end{align}
	Therefore, we have
	\begin{align}
		\left( \epsilon_i + \epsilon_i'' \right) - \epsilon_i''  (1-h^2)   Y_i  \leq \left( \epsilon_i + \epsilon_i'' \right) - \epsilon_i''  (1-h^2)   \frac{1 - \epsilon_i}{2}.
	\end{align}
	Denoting $\mu_i=\expect[X_i]$, by the monotonicity of the exponential function, we have 
	\begin{align}
		&\expect_{\epsilon}  \expect_{X} \expect_{Y|X, \epsilon} \left[ \exp \left( \frac{\lambda}{\trainsize}\sum_{i=1}^\trainsize  X_{i} \left[ \left( \epsilon_i + \epsilon_i'' \right) - \epsilon_i''  (1-h^2)   Y_i \right] \right) \right]  \\
		&\leq \expect_{\epsilon}  \expect_{X}  \left[ \exp \left( \frac{\lambda}{\trainsize}\sum_{i=1}^\trainsize  X_{i} \left[ \left( \epsilon_i + \epsilon_i'' \right) - \epsilon_i''  (1-h^2)   \frac{1 - \epsilon_i}{2} \right] \right) \right] \\
		&=\prod_{i=1}^\trainsize  \left[ 1-\mu_i + \frac{\mu_i}{2} \exp\left( \frac{\lambda}{\trainsize} (1+c_2) \right) + \frac{\mu_i}{2} \exp\left(- \frac{\lambda}{\trainsize}(1+h^2 c) \right) \right],
	\end{align}
	For the formula upper bounded by $1$, it is sufficient to prove
	\begin{align}
		&\exp \left( \frac{\lambda}{\trainsize} \left( 1 + c_2 \right) \right) + \exp \left( -\frac{\lambda}{\trainsize} \left( 1 + h^2 c \right)  \right) \leq 2.
	\end{align}
	Because we have $e^x \geq x +1$, thus $ e^{-x} \leq \frac{1}{1+x}$ for $x > -1$ and $e^{x} \leq \frac{1}{1-x}$ for $x < 1$. Therefore, it is sufficient to have
	\begin{align}
		\frac{1}{1 - \frac{\lambda}{\trainsize} \left( 1 +  c_2 \right)}+ \frac{1}{1 + \frac{\lambda}{\trainsize} \left( 1 + h^2 c \right)} &\leq 2  , \quad \frac{\lambda}{m}(1+c_2) \leq 1.
	\end{align}
	Thus, we know the argument holds for
	\begin{align}
		\frac{\lambda}{\trainsize} \leq \min (\frac{h^2 c-c_2}{2(1+h^2 c)(1+c_2)}, \frac{1}{1+c_2}) = \frac{h^2 c-c_2}{2(1+h^2 c)(1+c_2)},
	\end{align}
	where the last equality holds when $c_2 < h^2 c$.
\end{proof}

Now we are ready for the proof of \cref{theorem_new_bound}. Denoting $g(\cdot) = \expect_{\Model} f(\cdot)$, one can write
\begin{align}
	\GibbsRisk{\Model} &  - \GibbsEmpRisk{\Model} - \frac{c}{\trainsize} \sum_{i=1}^\trainsize \expect_{\Model} [f(z_i) - (1+h)\gloss{\Model}{z_i}]^2 \\
	&= \GibbsRisk{\Model} - (1+c)  \GibbsEmpRisk{\Model} + \frac{c(1-h^2)}{\trainsize} \sum_{i=1}^\trainsize (\gloss{\Model}{z_i})^2 \\
	&= \loss_{\data}(g) - (1+c)  \hat{\loss}_{\train}( g) + \frac{c(1-h^2)}{\trainsize} \sum_{i=1}^\trainsize (g (z_i))^2 \\
	&= \loss_{\data}(g) - (1+c)  \hat{\loss}_{\train}( g)  + c(1-h^2) \hat{\loss}_{\train}( g^2).
\end{align}
Recall that for $\klBound > 0$, we have defined $\qdomain(\klBound)=\{\Model: \KL{\Model}{\Truth} \leq \klBound\}$. We start from the formula,
\begin{align}
	&\prob_{\train} \left[ \sup_{\Model \in \qdomain(\klBound)} \loss_{\data}(g) - (1+c)  \hat{\loss}_{\train}( g)  + c(1-h^2) \hat{\loss}_{\train}( g^2)  \geq t \right].
\end{align}

By \Cref{coro:shifted-rademacher-ineq}, let $c'=\frac{c+c_2}{2}, c''=\frac{c-c_2}{2}$, and $\{\epsilon_i\}$ being i.i.d.\ Rademacher random variables, we have
\begin{align}
	&\prob_{\train} \left[ \sup_{\Model \in \qdomain(\klBound)}\loss_{\data}(g) - (1+c)  \hat{\loss}_{\train}( g)  + c(1-h^2) \hat{\loss}_{\train}( g^2) \geq t \right] \\
	&\leq 4 \prob_{\train, \epsilon} \left[ \sup_{\Model \in \qdomain(\klBound)} \frac{1}{\trainsize}\sum_{i=1}^\trainsize \epsilon_i \left( \left(1+c'\right)g(z_i) - c'(1-h^2) g(z_i)^2 \right) - c''  \hat{\loss}_{\train}(g-(1-h^2)g^2)  \geq \frac{t}{4} \right].
\end{align}
Plugging into $g = \expect_{\Model} f(\cdot)$ yields
\begin{align}
	&  \sup_{\Model \in \qdomain(\klBound)} \frac{1}{\trainsize}\sum_{i=1}^\trainsize \epsilon_i \left( \left(1+c'\right)g(z_i) -  c'(1-h^2) g(z_i)^2 \right) - c''  \hat{\loss}_{\train}(g- (1-h^2)g^2)  \\
	&= \sup_{\Model \in \qdomain(\klBound)} \frac{1}{\trainsize}\sum_{i=1}^\trainsize [\epsilon_i \left(1+c'\right) - c'' ]  \expect_\Model f(z_i) - \left[ \epsilon_i c' - c'' \right] (1-h^2)  [ \expect_\Model f(z_i) ]^2 . 
\end{align}
Given $\bepsilon=\{\epsilon_i\}_{i=1}^\trainsize, \bz=\{z_i\}_{i=1}^\trainsize$, we suppose $\hat{\Model}(\bepsilon, \bz)$ achieves the supremum above (if the supremum cannot be achieved, one can use a sequence of $\{\hat{\Model}_i(\bepsilon, \bz)\}$ to approximate arbitrarily close to the supremum).
Using 
\begin{align}
	\epsilon_i'' := \epsilon_i c' - c'' =  \epsilon_i \frac{c_1 + c_2}{2} - \frac{c_1 - c_2}{2}, 
\end{align}
we have
\begin{align}
	& \sup_{\Model \in \qdomain(\klBound)} \frac{1}{\trainsize}\sum_{i=1}^\trainsize [\epsilon_i \left(1+c'\right) - c'' ]  \expect_\Model f(z_i) - \left[ \epsilon_i c' - c'' \right] (1-h^2)  [ \expect_\Model f(z_i) ]^2 \\
	&\leq \sup_{\Model \in \qdomain(\klBound)} \frac{1}{\trainsize}\sum_{i=1}^\trainsize [\epsilon_i \left(1+c'\right) - c'' ]  \expect_\Model f(z_i) - \left[ \epsilon_i c' - c'' \right] (1-h^2)  \expect_\Model f(z_i) \gloss{\hat{\Model}(\bepsilon, \bz)}{z_i}  \\
	&= \sup_{\Model \in \qdomain(\klBound)}  \expect_\Model \left[ \frac{1}{\trainsize}\sum_{i=1}^\trainsize (\epsilon_i +\epsilon_i'' )  f(z_i) -\epsilon_i'' (1-h^2)  f(z_i) \gloss{\hat{\Model}(\bepsilon, \bz)}{z_i} \right]  \\
	&\leq \frac{\klBound}{\lambda} +  \frac{1}{\lambda} \log \expect_P \left[ \exp \left( \frac{\lambda}{\trainsize}\sum_{i=1}^\trainsize  f(z_i) \left[ \left( \epsilon_i +\epsilon_i'' \right) - \epsilon_i''  (1-h^2)  \expect_{\hat{\Model}(\bepsilon, \bz)}f(z_i) \right] \right)  \right],
\end{align}
where the last inequality follows from duality for convex optimization~\citep[][Chp.~5]{Boyd2004}.

Therefore, we have 
\begin{align}
	&\prob\left[ \sup_{\Model \in \qdomain(\klBound)}\GibbsRisk{\Model} - (1+c)  \GibbsEmpRisk{\Model} + \frac{c(1-h^2)}{\trainsize} \sum_{i=1}^\trainsize (\gloss{\Model}{z_i})^2  \geq t \right] \\ 
	&\leq 4\prob\left[ \frac{\klBound}{\lambda} +  \frac{1}{\lambda} \log \expect_P \left[ \exp \left( \frac{\lambda}{\trainsize}\sum_{i=1}^\trainsize  f(z_i)  \left[ \left( \epsilon_i + \epsilon_i'' \right) - \epsilon_i''  (1-h^2)   \expect_{\hat{\Model}(\bepsilon, \bz)}f(z_i) \right] \right)  \right]  \geq \frac{t}{4}  \right] \\  
	&\leq 4\exp \left( \klBound - \frac{\lambda t}{4} \right) \expect_{\train} \expect_{\bepsilon} \expect_{P} \left[ \exp \left( \frac{\lambda}{\trainsize}\sum_{i=1}^\trainsize  f(z_i)  \left[ \left( \epsilon_i + \epsilon_i'' \right)  - \epsilon_i''  (1-h^2) \gloss{\hat{\Model}(\bepsilon, \bz)}{z_i} \right] \right) \right] \\
	&\leq  4\exp \left( \klBound - \frac{\lambda t}{4} \right),
\end{align}
where the last inequality comes from \Cref{lem:XY} by considering $X_i$ as $f(z_i)$ and $Y_i$ as $\gloss{\hat{\Model}(\bepsilon, \bz)}{z_i} $, with
\begin{align}
	C:=\frac{\lambda}{\trainsize} \leq \frac{h^2 c-c_2}{2(1+h^2 c)(1+c_2)}.
\end{align}

Now let $4\exp \left( \klBound - \frac{\lambda t}{4} \right)$ equals to $\delta$, we have 
\begin{align}
	t = \frac{4}{\lambda} (\klBound + \log \frac{4}{\delta}) = \frac{4}{C \trainsize} (\klBound + \log \frac{4}{\delta}).
\end{align}
Note that the shifted symmetrization inequality requires $t \geq \frac{(1+c_2)(1+c_2 h^2)}{\trainsize c_2 h^2}$ by \cref{lem:shifted-sym-eq}. Combining with the previous requirement for $C$ together, we have
\begin{align}
	C \leq \min (\frac{h^2 c-c_2}{2(1+h^2 c)(1+c_2)}, \frac{4c_2 h^2}{(1+c_2)(1+c_2 h^2)}(\klBound + \log \frac{4}{\delta}) ).
\end{align}
Using such $C$ we have with probability at least $1-\delta$,
\begin{align}
	&\sup_{\Model \in \qdomain(\klBound)} \expect_{\Model} \loss_{\data} (f) -  \expect_{\Model} \hat{\loss}_{\train} (f) - \frac{c}{\trainsize} \sum_{i=1}^\trainsize \expect_{\Model} [f(z_i) - (1+h)\gloss{\Model}{z_i}]^2  \leq \frac{4}{C \trainsize} (\klBound + \log \frac{4}{\delta}).
\end{align}

Finally we combine all possible $\klBound$ using a union bound. Define $\Gamma_0 =\{\Model: \KL{\Model}{\Truth} \leq 2\}$ and $\Gamma_j = \{\Model: \KL{\Model}{\Truth} \in [2^j, 2^{j+1}]\}$ for $j\ge 1$. Let $\delta_j = 2^{-(j+1)}\delta$ so that $\sum_{j=0}^{\infty} \delta_j = \delta$. Then for any $\Model$ there is a $j_\Model$ such that $\Model \in \Gamma_{j_\Model}$. Then we have 
\begin{align}
	&\KL{\Model}{\Truth} \leq 2^{j_\Model + 1} \leq 2 \max (\KL{\Model}{\Truth}, 1) \\
	& \delta_{j_\Model} = 2^{-(j_\Model+1)}\delta \geq \frac{\delta}{2  \max (\KL{\Model}{\Truth}, 1)} ,
\end{align}
Therefore with probability at least $1-\delta$ over draws of $\train$, for any $\Model$, 
\begin{align}
	\expect_{\Model} \loss_{\data} (f) &\leq   \expect_{\Model} \hat{\loss}_{\train} (f) + \frac{c}{\trainsize} \sum_{i=1}^\trainsize \expect_{\Model} [f(z_i) - (1+h)\gloss{\Model}{z_i}]^2 \notag \\
	&+ \frac{4}{C \trainsize} \left[ 2 \max (\KL{\Model}{\Truth}, 1) + \log \frac{8\max (\KL{\Model}{\Truth}, 1)}{\delta} \right] \\
	&\leq \expect_{\Model} \hat{\loss}_{\train} (f) + \frac{c}{\trainsize} \sum_{i=1}^\trainsize \expect_{\Model} [f(z_i) - (1+h)\gloss{\Model}{z_i}]^2 + \frac{4}{C \trainsize} \left[ 3\KL{\Model}{\Truth} + \log \frac{1}{\delta}   + 5 \right],
\end{align}
provided that
\begin{align}
	C \leq \min (\frac{h^2 c-c_2}{2(1+h^2 c)(1+c_2)}, \frac{4c_2 h^2}{(1+c_2)(1+c_2 h^2)}(\klBound + \log \frac{4}{\delta}) ).
\end{align}
Therefore, it is sufficient if
\begin{align}
	C = \frac{2h^4 c}{1+16h^2 c}, \; c_2=\frac{h^2 c}{1+16h^2 c}.
\end{align}

\end{document}